\documentclass{article}
\usepackage[accepted]{tmlr}  
\usepackage{amsmath,amssymb,amsthm}
\usepackage{mathtools}
\usepackage{booktabs}
\usepackage{float}
\usepackage{graphicx}
\usepackage{xcolor}
\usepackage{tikz}
\usetikzlibrary{arrows.meta,positioning,fit,backgrounds,calc,shapes.geometric,decorations.pathreplacing}
\usepackage{hyperref}
\usepackage{url}

\definecolor{cblue}{RGB}{40,90,160}\definecolor{cblbg}{RGB}{225,235,248}
\definecolor{corange}{RGB}{225,140,40}\definecolor{corbg}{RGB}{252,238,220}
\definecolor{cgreen}{RGB}{40,150,80}\definecolor{cred}{RGB}{200,60,60}
\definecolor{cgray}{RGB}{120,120,120}
\definecolor{cgbg}{RGB}{226,242,231}\definecolor{crbg}{RGB}{250,231,231}
\usepackage{pifont}
\newcommand{\attgrid}[4]{%
  \begin{scope}[shift={(#1)}]
    \pgfmathsetmacro{\cs}{#4}
    \foreach \i in {0,...,\numexpr#2-1\relax}{
      \foreach \j in {0,...,\i}{
        \pgfmathtruncatemacro{\istar}{\i-#3}
        \ifnum\j=\istar \fill[cgreen!75] (\j*\cs,-\i*\cs) rectangle ++(\cs,-\cs);
        \else \fill[cblue!12] (\j*\cs,-\i*\cs) rectangle ++(\cs,-\cs);\fi
        \draw[white,line width=0.4pt] (\j*\cs,-\i*\cs) rectangle ++(\cs,-\cs);
      }}
  \end{scope}}
\hypersetup{colorlinks=true,linkcolor=blue,citecolor=blue,urlcolor=blue,
  pdftitle={A Positional Realizability Theory of Length Generalization in Single-Head Attention},
  pdfauthor={Jiasong Han, Yunkai Chai, Jintao Wang}}

\newtheorem{theorem}{Theorem}
\newtheorem{proposition}{Proposition}

\newtheorem{conjecture}{Conjecture}

\newcommand{\R}{\mathbb{R}}
\newcommand{\Phimat}{\Phi}
\newcommand{\Kn}{\mathcal{K}}

\newcommand{\alphaL}{\alpha_L}

\title{A Positional Realizability Theory of Length Generalization\\ in Single-Head Attention}

\author{\name Jiasong Han \email hanjiasongdhf@outlook.com \\
      \addr Department of Electronic Engineering, Tsinghua University
      \AND
      \name Yunkai Chai \email chaiyunkai1@gmail.com \\
      \addr School of Information and Communication Engineering, Shanghai Jiao Tong University
      \AND
      \name Jintao Wang \email wangjintao@tsinghua.edu.cn \\
      \addr Department of Electronic Engineering, Tsinghua University}

\def\month{10}  
\def\year{2026}
\def\openreview{\url{https://openreview.net/forum?id=DRwVtZF4b6}}

\begin{document}
\raggedbottom   
\maketitle

\begin{abstract}
Why does length generalization (LG) depend so sharply on the positional encoding (PE)? Relative
schemes extrapolate fixed offsets, absolute schemes fix positions, and neither handles growing offsets.
We give a \emph{realizability theory} for a deliberately \emph{controlled} abstraction, a single
positional-only attention head, that \emph{derives} this well-documented empirical pattern and says precisely where
it holds and breaks. The device is a \emph{simulation transfer operator} $T_L$ that re-evaluates a
selected length-$L_0$-realizing positional kernel on the longer grid, with a scalar $\alphaL\in[0,1]$ read off
\emph{before any training} from $(\Phimat, t_n)$ alone. Here $\alphaL$ serves as the computable \emph{witness} of that
taxonomy's derivation, not as a stronger empirical predictor. Our results are a \textbf{relative-head deep-row characterization} for kernels with a strict unique
global maximizer (Theorem~\ref{thm:growing}), an \textbf{absolute/relative duality} that derives the pattern
(Proposition~\ref{prop:duality}), a \textbf{learned-absolute underdetermination} result, and a
\textbf{static expressivity bound}; a sufficiency theorem (Theorem~\ref{thm:suff}) bridges $\alphaL$ to
LG, and the general deterministic case is a precise open conjecture (correcting a tempting but invalid
``outside-colspan'' argument). Experiments confirm the structure: in the controlled single-head regime
$\alphaL$ \emph{exactly separates} the taxonomy's generalizing and non-generalizing (PE, task) cells
($2625$ trained-model measurements; stable at vocab $64$/$d{=}128$). Together, these results turn the received PE
taxonomy into a \emph{positional-kernel transfer theorem}, and identify the content and depth mechanisms
that escape it. The theory concerns the positional channel alone: tasks whose targets depend on token
content, including the standard length-generalization suite such as addition and sorting, fall outside
Theorem~\ref{thm:growing}.
\end{abstract}

\section{Introduction}

Length generalization (LG), training on short sequences and testing on longer ones, is a sharp
probe of whether a transformer has learned an \emph{algorithm} rather than a length-specific shortcut,
and a practical bottleneck for tasks from arithmetic to long-context reasoning
\citep{anil2022exploring,jelassi2023length,zhou2024robust}. A large body of empirical work has distilled
a clear and consistent picture: LG depends acutely on the positional encoding (PE). Learned-absolute
PEs typically fail, relative schemes such as RoPE and ALiBi extrapolate \emph{fixed} relative offsets,
absolute schemes lock \emph{fixed} absolute positions, and \emph{growing}-offset tasks such as copy and
reverse defeat essentially everything
\citep{kazemnejad2023nope,press2022alibi,su2024roformer,ruoss2023randomized,wang2024nope}. Existing
theory captures pieces of this picture either qualitatively (the RASP-L conjecture,
\citealp{zhou2024rasp}; the C-RASP framework, \citealp{huang2025crasp}) or through bounds evaluated on
the trained model \citep{izzo2025simulation}. \textbf{What is missing is an account that \emph{derives}
this picture from the attention mechanism itself: a quantity, computable from the PE and the task
\emph{before any training}, that says which (PE, task) pairs can length-generalize and why.}

\paragraph{The quantity, in one example (Figure~\ref{fig:overview}).} Fit a single positional attention
head to the prev-token target at training length $L_0$: it attends from each position to the one before
it. Now re-evaluate the same kernel on a longer grid and count the fraction of new positions
whose attention still peaks on the previous token. For RoPE this fraction is $1$ (the relative kernel is
length-independent); for a copy task, whose target offset grows with the sequence, it is $0$. That
fraction is the \emph{simulation alignment} $\alphaL\in[0,1]$, the object we study. It is read off from
the PE's feature map and the task's target function alone, with \emph{no end-to-end Transformer training,
no data samples, and no learned Transformer parameters} (only a deterministic kernel-selection on the PE
basis). It is the computable witness of a \emph{simulation transfer operator} $T_L$ that re-evaluates the
selected length-$L_0$-realizing kernel on the length-$L$ grid (\S\ref{sec:operator}). It is
model-free and data-free, but \emph{selection-rule dependent}: the value depends on which realizing
kernel the $L_0$-fit selects (\S\ref{sec:operator}).

Such a pre-training, data-free quantity is useful in two ways. Practically, it lets
one triage (PE, task) choices before spending any compute. Scientifically, which is our focus and the
more durable contribution, it converts the empirical PE pattern from a list of observations into a
\emph{derived consequence} of what a single attention head can realize and transfer across lengths,
with theorems that say exactly where the pattern holds and proofs of where it must break.

\begin{figure}[t]
\centering
\resizebox{\textwidth}{!}{%
\begin{tikzpicture}[font=\small,
   lbl/.style={font=\footnotesize\sffamily, text=cblue!80!black},
   note/.style={font=\scriptsize, text=cgray}]
\node[lbl] at (0,2.55) {\textbf{1. Fit kernel at $L_0$}};
\begin{scope}[shift={(-0.35,1.9)}]\attgrid{0,0}{5}{1}{0.34}\end{scope}
\node[note, text width=3.0cm, align=center] at (0.5,-0.55)
  {required pattern $R_{L_0}$ (here prev-token, $\delta{=}1$);\\ fit any kernel
   $\Phi_{L_0}W\Phi_{L_0}^{\!\top}$ realizing it};
\node[lbl, align=center] at (3.5,2.05) {\textbf{2. Transfer} $T_L$};
\draw[-{Stealth[length=4mm]}, line width=2.2pt, cblue!75] (1.95,0.7) -- (5.0,0.7);
\node[note, text width=3.0cm, align=center] at (3.5,1.25)
  {same $W$, longer grid:\\ $P_L=\Phi_L W \Phi_L^{\!\top}$};
\node[note, align=center] at (3.5,0.15) {no retraining};
\node[lbl] at (8.0,2.55) {\textbf{3. Simulate at length $L$}};
\begin{scope}[shift={(6.3,1.9)}]
  \attgrid{0,0}{8}{1}{0.30}
  \draw[corange,line width=1pt,rounded corners=1pt] (0,0) rectangle (1.5,-1.5);
  \node[note,text=corange!80!black,anchor=south west,font=\tiny] at (0.02,0.04) {old: argmax kept (Thm~\ref{thm:suff})};
  \draw[decorate,decoration={brace,amplitude=4pt},corange,line width=0.8pt] (-0.12,-1.55) -- (-0.12,-2.45);
  \node[rotate=90,note,text=corange!80!black,font=\tiny] at (-0.42,-2.0) {new rows $[L_0,L)$};
\end{scope}
\foreach \r in {0,1,2}{\node[font=\scriptsize] at (8.78, {0.28 - \r*0.30}) {\textcolor{cgreen}{\checkmark}};}
\node[lbl,align=center] at (12.2,2.55) {\textbf{4. Read off} $\alpha_L$};
\shade[left color=cred,right color=cgreen] (10.9,1.0) rectangle (13.5,1.4);
\draw (10.9,1.0) rectangle (13.5,1.4);
\node[note] at (10.9,0.75) {$0$};\node[note] at (13.5,0.75) {$1$};
\draw[-{Stealth[length=2.5mm]},line width=1.2pt] (13.25,1.85) -- (13.25,1.45);
\node[note,text width=3.4cm,align=center] at (12.2,0.25)
  {$\alpha_L=\dfrac{\#\{\text{correct}\}}{\#\{\text{scored new rows}\}}$};
\node[note,align=center] at (12.2,-0.7)
  {$\alpha_L\!\to\!1$: generalizes\quad$\alpha_L\!\to\!0$: fails};
\begin{scope}[on background layer]
  \node[rounded corners=4pt, fill=corbg, draw=corange!50, inner sep=5pt,
        text width=15.2cm, align=center, font=\footnotesize] at (6.3,-1.55)
    {\textbf{Training-free:} $\alpha_L$ is computed from the positional feature map $\Phi$ and the task
     target $t_n$ \emph{alone}, with no Transformer training, no data samples, and no learned Transformer parameters.};
  \node[rounded corners=4pt, fill=cblbg, draw=cblue!45, inner sep=5pt,
        text width=15.2cm, align=center, font=\footnotesize] at (6.3,-2.95)
    {\textbf{Object certified:} the \emph{fitted positional kernel} (this pipeline, Thm~\ref{thm:suff}).\quad
     \textbf{Object measured:} the \emph{trained Transformer} (\S\ref{sec:exp}).\quad
     \textbf{Gap:} selection\,/\,content\,/\,depth (\S\ref{sec:limits}).};
\end{scope}
\end{tikzpicture}}
\caption{\textbf{Overview of the method.} Select (fit) a single positional head's
kernel at length $L_0$ realizing the task's target pattern $R_{L_0}$; \emph{transfer} the same kernel to a
longer grid via $T_L$ (re-evaluate $\Phi_L W \Phi_L^\top$, no retraining); old rows keep their argmax by
the causal-mask identity (Thm~\ref{thm:suff}); score the new rows $[L_0,L)$ to read off
$\alpha_L\in[0,1]$. The whole pipeline uses only $(\Phi, t_n)$, with no Transformer training, no data
samples, and no learned Transformer parameters (only a deterministic kernel-selection on the PE basis).}
\label{fig:overview}
\end{figure}

We prove that $\alphaL=1$ implies exact length generalization
(Theorem~\ref{thm:suff}), and we characterize precisely when relative encodings reach it
(Theorem~\ref{thm:growing}): the required offset must settle on a single value on all but a vanishing
fraction of the ``deep'' rows of the attention grid, and that value must be attainable as a strict
maximizer inside the encoding's own kernel family. So prev/skip-$k$ generalize while copy and reverse
provably cannot (for kernels with a strict unique global maximizer), recovering the received taxonomy
as a theorem. Empirically, across $2625$ trained-model measurements, $\alphaL$ cleanly separates the
(PE, task) pairs that length-generalize from those that do not. We are deliberate about scope: we study a
\emph{controlled abstraction}, the positional channel \emph{in isolation}, precisely so the theory
can be exact. Content and depth escapes are explicitly mapped (\S\ref{sec:exp}--\ref{sec:limits});
e.g.\ an induction task length-generalizes through the content channel exactly where positional
$\alphaL{=}0$. We aim to characterize the positional channel's realizability and to locate where it
ceases to be the operative mechanism; explaining all of Transformer LG is outside our scope.

\paragraph{Contributions.}
\begin{enumerate}
\item \textbf{A realizability framework.} We cast LG as kernel transfer: a \emph{simulation transfer
operator} $T_L$ re-evaluates the selected length-$L_0$-realizing kernel at length $L$, and a scalar
$\alphaL$ (\S\ref{sec:operator}) reads off realizability from $(\Phimat, t_n)$ alone, using no end-to-end
Transformer training, no data samples, and no learned Transformer parameters (only a deterministic kernel-selection).
\item \textbf{Theory: a characterization of single-head positional LG.} The core results are a
\emph{deep-row characterization} for relative heads (Theorem~\ref{thm:growing}: among kernels with a
strict unique global maximizer, a relative family transfers a task iff the required offset is constant on
a density-one set of deep rows and that offset is attainable within the family, so prev/skip-$k$
generalize and copy/reverse do not), which
subsumes growing-offset impossibility and an \emph{absolute/relative duality}
(Proposition~\ref{prop:duality}) that \emph{derives} the empirical PE pattern; a \emph{learned-absolute
underdetermination} result (Proposition~\ref{prop:learnedabs}); and a \emph{static expressivity bound}
(Proposition~\ref{prop:static}). A \emph{sufficiency theorem} (Theorem~\ref{thm:suff}) is an elementary
but essential \emph{bridge} that makes $\alphaL$ meaningful: what it establishes is that the earlier,
already-seen rows keep the same masked attention pattern at longer lengths. Only the general deterministic-PE case remains
a precise \emph{conjecture} (Conjecture~\ref{conj:general}; we correct a tempting but invalid argument).
Several of the formal claims are accompanied by executable checks in the supplement; App.~\ref{app:repro}
lists which.
\item \textbf{Empirical confirmation and scope.} Across $2625$ trained-model measurements the
theory's structure holds: $\alphaL$ separates generalizing from non-generalizing (PE, task) cells with
AUC $=1.0$ (robust to vocab/width). We delineate the boundary precisely (\S\ref{sec:exp},
\S\ref{sec:limits}): where the \emph{content} channel carries the target (ALiBi prev, induction) LG
occurs without positional realizability, and $\alphaL$ under-predicts low-frequency relative PEs. Tasks
whose targets depend on token content, including the standard length-generalization suite such as
addition and sorting \citep{kazemnejad2023nope}, therefore fall outside Theorem~\ref{thm:growing}. As a
practical predictor $\alphaL$ \emph{matches but does not beat} the coarse taxonomy it explains (we
searched for a winning cell and report the negative); its contribution lies in the theory, and we do not
frame it as a benchmark comparison.
\end{enumerate}

\noindent\textbf{Organization.} \S\ref{sec:related} contrasts prior LG theory; \S\ref{sec:setup}--\ref{sec:operator}
set up the abstraction, the operator $T_L$, and $\alphaL$; \S\ref{sec:theory} proves the results;
\S\ref{sec:exp} confirms them empirically and maps the boundary where token content identifies the target; \S\ref{sec:limits}
states scope and limitations.

\section{Related Work}
\label{sec:related}
\textbf{Positional encodings for LG.} \citet{kazemnejad2023nope} benchmark PEs and find NoPE
competitive; \citet{wang2024nope} analyze why causal Transformers extrapolate without explicit PEs;
ALiBi \citep{press2022alibi} and RoPE \citep{su2024roformer} bias attention by relative position;
\citet{ruoss2023randomized} boost LG with randomized PEs, and \citet{cho2024position} via task-structured
position coupling; arithmetic and reasoning benchmarks drive much of this work
\citep{anil2022exploring,jelassi2023length,zhou2024robust}. These are empirical or architectural; we ask
the \emph{a priori} question of which (PE, task) pairs can extrapolate at all.
\textbf{Theory of LG.} RASP-L \citep{zhou2024rasp} conjectures LG holds iff a length-independent RASP
program exists, a discrete and qualitative criterion. \citet{izzo2025simulation} prove a simulation sufficiency
bound on the required \emph{training length}, evaluated on trained models, and explicitly leave open the
\emph{a priori} PE-ranking question. That is exactly our target, which we hit with a continuous PE-computable
quantity. \citet{huang2025crasp} give the C-RASP expressibility framework (task-level, PE-class-bound);
\citet{golowich2025sparsity} tie LG to attention sparsity. Against this line, our
Theorem~\ref{thm:growing} (relative kernels with a strict unique global maximizer cannot
length-generalize growing offsets) overlaps the
RASP-L/C-RASP intuition that copy/reverse are hard; its \emph{novel} content is the
\textbf{tie-rule-independent} deep-row argument and the \textbf{capacity corollary}
($\mathrm{acc_{train}}<1$ under any training-length distribution with unbounded support), neither of
which those frameworks state.
\textbf{Spectral views.} \citet{spectralpe2025} analyze PE spectra of \emph{trained} models and prove no
impossibility; our $T_L$ is a \emph{pre-training} operator yielding a sufficiency theorem, a
relative-encoding necessity theorem, and a (conjectural) general necessary condition.
\textbf{Expressivity, RASP, and circuits.} Our impossibility results sit in the lineage of attention
expressivity bounds: RASP \citep{weiss2021thinking} (which RASP-L and C-RASP build on), the
length-dependent self-attention limits of \citet{hahn2020theoretical}, the constant-depth threshold-circuit
view that grounds our static bound \citep{merrill2022saturated}, and the single-head capacity limits of
\citet{sanford2023representational}; the content/depth escape routes we note are the induction-head and
data-routing mechanisms of \citet{olsson2022induction,elhage2021mathematical,csordas2022neural}.
\textbf{Implicit bias.} $\alphaL$ is computed at the direction a regularized cross-entropy fit realizes
\citep{soudry2018implicit,gunasekar2018implicit}, which for attention is the token-selection / SVM
geometry of \citet{tarzanagh2023maxmargin,tarzanagh2023svm}; Theorem~\ref{thm:suff} is stated for any
realizing $W$, so it covers this object directly (no min-norm idealization).

\paragraph{Positioning.} Table~\ref{tab:position} situates $\alphaL$ against the
closest prior frameworks. The nearest is C-RASP[Pos] \citep{huang2025crasp}, which is also predictive
\emph{before} training; the differences are that it is \emph{task-level} and binary, tied to specific PE
classes, whereas $\alphaL$ is \emph{keyed to the PE's feature map} and continuous, computed at the
single-attention-head level from $(\Phimat,t_n)$. We do not claim to answer the \emph{a priori} question
first; C-RASP[Pos] does at the task level. What we add is finer resolution at the level of the PE feature
map, a continuous, not a binary verdict witness, and a kernel-transfer \emph{operator} that yields both the
sufficiency theorem and the relative growing-offset \emph{necessity} theorem (with its
tie-rule-independent capacity corollary) in one formalism. Theorem~\ref{thm:suff}'s treatment of the
already-seen rows is a single-head specialization of the simulation/transfer idea of
\citet{izzo2025simulation} (fit at $L_0$, re-evaluate at $L$); the new pieces are the closed-form
PE-computable witness $\alphaL$ and the necessity results. Theorem~\ref{thm:growing} in particular is a
relative-head necessity theorem with a deep-row characterization and a tie-rule-independent capacity
corollary, which goes beyond the qualitative observation that copy and reverse are hard. We do not claim
to be first to observe the empirical pattern; we derive it at the single-head positional level.

\begin{table}[h]
\centering
\caption{Positioning against the closest LG frameworks. $\alphaL$'s niche: pre-training,
\emph{keyed to the PE's feature map}, continuous, single-head, with matched sufficiency and relative-head
necessity (the general necessary condition is conjectural).}
\label{tab:position}
{\small\setlength{\tabcolsep}{4pt}
\begin{tabular}{lcccc}
\toprule
 & pre-training? & granularity & PE-keyed? & direction \\
\midrule
RASP-L \citep{zhou2024rasp}        & yes & discrete & no & suff.\ (conj.) \\
C-RASP[Pos] \citep{huang2025crasp} & yes & task, binary & by PE-class & suff.+nec. \\
Izzo et al.\ \citep{izzo2025simulation} & train-length bd. & model-level & no & sufficiency \\
spectral PE \citep{spectralpe2025} & post-hoc & model-level & spectral & -- \\
\textbf{$\alphaL$ (ours)} & yes & cell, \emph{cont.} & \emph{by feat.\ map} & suff.\ +\ rel.\ nec.\ +\ conj. \\
\bottomrule
\end{tabular}}
\end{table}

\section{Setup}
\label{sec:setup}
We follow the single causal attention head abstraction. A token $x_i$ at position $i$ has content
embedding $e(x_i)\in\R^d$; the PE supplies a positional feature map $\varphi:\mathbb{N}\to\R^m$. We
model the query--key logit as separable,
\begin{equation}
a_{ij} \;=\; \underbrace{e(x_i)^\top W_c\, e(x_j)}_{\text{content}} \;+\;
            \underbrace{\varphi(i)^\top W_p\, \varphi(j)}_{\text{position}},
\label{eq:logit}
\end{equation}
with $W_c\in\R^{d\times d}$, $W_p\in\R^{m\times m}$ learnable. We study \emph{position-driven} tasks:
for each query $i$ at length $n$, the head must attend to a task-determined target position $t_n(i)$
(e.g.\ prev-token $t=i-1$; copy $t=i-\lceil n/2\rceil$; reverse; attend-first $t=0$). The required
pattern is $R_n[i,j]=\mathbf{1}[j=t_n(i)]$, causal ($j\le i$). Scope hypotheses (S1)~single head, one
layer, positional-only; (S2)~``solve'' means correct masked row-argmax with margin $>0$;
(S3)~content-neutrality on evaluated rows. These are discussed and stress-tested in \S\ref{sec:limits}.

\paragraph{Realizable positional-kernel space.}
Stacking features $\Phimat_n\in\R^{n\times m}$ (row $i=\varphi(i)^\top$), the head's realizable
positional logits form the linear space $\Kn_n := \{\Phimat_n W \Phimat_n^\top : W\in\R^{m\times m}\}$,
of dimension $r_n^2$ with $r_n=\mathrm{rank}(\Phimat_n)$. NoPE has $r_n=1$ (constant kernel);
learned-absolute has $\varphi(i)$ undefined for $i\ge L_0$; sinusoidal/RoPE/ALiBi are deterministic at
all lengths, with RoPE/ALiBi depending only on $i-j$.

\section{The Simulation Transfer Operator and $\alphaL$}
\label{sec:operator}
A length-$L_0$-trained head is a parameter $W\in\R^{m\times m}$ whose positional kernel realizes the
target pattern $R_{L_0}$. Length generalization asks whether the same $W$ solves length $L$. We
therefore define the simulation transfer operator on the head's \emph{parameter} $W$ rather than as a map on
kernel \emph{blocks}. A block map would be ill-defined, since a block $\Phimat_{L_0}W\Phimat_{L_0}^\top$ does not
determine $W$ when $\Phimat_{L_0}$ is rank-deficient (then a nullspace direction $N$ has
$\Phimat_{L_0}N\Phimat_{L_0}^\top{=}0$ but $\Phimat_L N\Phimat_L^\top\ne0$, so two heads with the same
$L_0$ kernel transfer differently):
\begin{equation}
T_L(W) \;:=\; \Phimat_L W \Phimat_L^\top \qquad (\text{the head's kernel, re-evaluated on the length-}L\text{ grid}).
\end{equation}
A \emph{selection rule} $\mathcal{S}$ picks a realizing $W^{\mathcal S}$ from the $L_0$ fit. We use a
regularized cross-entropy fit on the kernel coefficients (400 Adam steps, weight decay $10^{-4}$), which
mirrors how an attention head is trained and respects the PE-specific constraints (RoPE relative kernel,
canonical ALiBi recency slope); as the regularization vanishes this direction approaches the max-margin
one \citep{soudry2018implicit}. The min-norm interpolant
$W^*=\Phimat_{L_0}^+ R_{L_0}(\Phimat_{L_0}^+)^\top$ is a different special case. The \emph{simulated pattern} is $P_L := T_L(W^{\mathcal S})$, and the \textbf{simulation alignment}
is
\begin{equation}\label{eq:alphadef}
\alphaL \;:=\; \frac{1}{|\mathcal{S}_L|}\sum_{i\in\mathcal{S}_L}
\mathbf{1}\!\left[\arg\max_{j\le i} P_L[i,j] = t_L(i)\right] \;\in\;[0,1],
\qquad
\mathcal{S}_L:=\{\text{query positions of the task}\}\cap[L_0,L).
\end{equation}
The scored set $\mathcal{S}_L$ is the task's own query region restricted to the new rows: all of
$[L_0,L)$ for prev/skip-$k$ and the fixed-position tasks, and the part of the second half that lies
beyond $L_0$ for copy and reverse, which is where their targets are defined. We use this one
normalization everywhere, including in Theorem~\ref{thm:growing} and in the implementation; averaging
instead over all of $[L_0,L)$ would score rows on which $t_L(i)$ does not exist and would change the
copy/reverse values by a factor that does not vanish with $L$. The argmax in \eqref{eq:alphadef} is
required to be \emph{strict}: a row counts as correct only if the target beats every competitor by a
positive margin, so an exact tie counts as a failure. This matters for the clamped learned-absolute
encoding, whose new-row logits tie exactly (Prop.~\ref{prop:duality}).
$T_L$, and hence $\alphaL$, is thus dependent on which realization is selected: different realizing $W$ transfer
differently, the existence-vs-selection gap we quantify in \S\ref{sec:exp}. The reported $\alphaL$ uses
the regularized cross-entropy section (400 Adam steps, weight decay $10^{-4}$); an ablation
(App.~\ref{app:add}) shows it tracks LG with $\rho{=}0.91$ versus
$\rho{=}0.56$ for the min-norm interpolant, which is why we adopt it. To be precise about what the
computation does and does not involve: the kernel-selection fit (the regularized cross-entropy section
above) is a deterministic
convex optimization over the PE's kernel coefficients on the $L_0$ constraint set. It involves
\emph{no end-to-end Transformer training, no data samples, and no learned Transformer parameters}, only
$\Phimat$, $t_n$, and the selection rule. We therefore call $\alphaL$ model-free and data-free, but
\emph{selection-rule dependent}, rather than ``gradient-free''.

\section{Theory}
\label{sec:theory}

\paragraph{One question behind the results.} The statements below are PE-specific, but they are all
instances of a single question. A positional encoding $\varphi$ induces a family of realizable kernels
$\mathcal{K}(\varphi)=\{\varphi(i)^\top W\varphi(j):W\}$, and a task demands, at row $i$ of length $n$, a
target key $t_n(i)$. Write $\delta_n(i)=i-t_n(i)$ for the required offset. Then
\begin{center}
\emph{does $\mathcal{K}(\varphi)$ contain a kernel whose masked argmax matches $\delta_n(i)$ on all but a
vanishing fraction of the new rows, at every length?}
\end{center}
Everything that follows answers this for one encoding class at a time. Theorem~\ref{thm:suff} says a
kernel that answers yes at length $L$ transfers, which is what makes $\alphaL$ meaningful.
Theorem~\ref{thm:growing} answers it for relative families, and the answer has exactly two parts: the
demanded offset must settle on a single value $\delta^*$ on a density-one set of deep rows, and $\delta^*$
must be attainable as a strict maximizer inside $\mathcal{K}(\varphi)$. Prop.~\ref{prop:static} shows the
affine ALiBi family fails the second part for every interior offset; Prop.~\ref{prop:learnedabs} shows
the learned-absolute family fails it on new rows because those coordinates are untrained;
Prop.~\ref{prop:sin} shows the sinusoidal family satisfies it yet the $L_0$-fit does not select the
kernel that does; and Prop.~\ref{prop:duality} reads off the resulting relative/absolute split.
Conjecture~\ref{conj:general} is the general form of the same question, for an arbitrary deterministic
$\varphi$, and it is the piece we do not have.

\begin{theorem}[Simulation sufficiency, for any realizing $W$]
\label{thm:suff}
Assume (S1)--(S3). Let $W\in\R^{m\times m}$ be any positional-kernel parameter such that
(i) $W$ solves length $L_0$ by argmax with margin $\gamma_0>0$, and (ii) the alignment computed at $W$,
$\alpha_L(W)$, equals $1$ with margin $\gamma_L>0$ on the new rows $[L_0,L)$. Then $W$ solves the task at
length $L$ with margin $\min(\gamma_0,\gamma_L)$.
\end{theorem}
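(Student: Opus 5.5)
The plan is to split the rows of the length-$L$ grid into the old rows $i<L_0$ and the new rows $i\in[L_0,L)$, and to certify the masked argmax with margin on each part separately. The key structural fact is the \emph{causal-mask/nesting identity}. Since $\Phimat_{L_0}$ consists of the first $L_0$ rows of $\Phimat_L$ (row $i$ is $\varphi(i)^\top$ regardless of the total length), we have
\begin{equation*}
T_L(W)[i,j] \;=\; \varphi(i)^\top W\varphi(j) \;=\; T_{L_0}(W)[i,j]\qquad\text{for all } j\le i<L_0 .
\end{equation*}
So the leading $L_0\times L_0$ block of $P_L$ coincides with the length-$L_0$ kernel. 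For an old row $i<L_0$ the causal mask restricts the competitors to $j\le i<L_0$, and these all lie inside that block. The masked row is therefore literally the same vector of logits at length $L$ as at length $L_0$.

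The first step handles the old rows. One subtlety has to be addressed here: the target $t_n(i)$ may depend on $n$ (copy, reverse), so $t_L(i)$ need not equal $t_{L_0}(i)$ for $i<L_0$. I would handle this through the scoping conventions of (S2)--(S3) and the definition of the scored set. For length-independent targets (prev, skip-$k$, fixed position) we have $t_L(i)=t_{L_0}(i)$ on old rows, and hypothesis (i) gives $P_L[i,t_L(i)]-\max_{j\le i,\,j\ne t_L(i)}P_L[i,j]\ge\gamma_0$ directly. For length-dependent tasks, the task's query region at length $L$ is the second half of the sequence, and its intersection with the old rows is exactly what hypothesis (ii) scores, by the definition of $\mathcal{S}_L$. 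So I would read ``solves length $L$'' as correctness on the task's query region at length $L$. The old rows that are still query positions at length $L$ are then the ones covered by (i) when the target is length-invariant. I expect this bookkeeping to be the only real obstacle, and I would state it explicitly as the interpretation of (S2) rather than prove anything deep.

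The second step handles the new rows. Hypothesis (ii) says that every scored row $i\in\mathcal{S}_L\subseteq[L_0,L)$ has its strict masked argmax at $t_L(i)$ with gap at least $\gamma_L$. Since $\alpha_L(W)=1$ is an average of indicators, every scored row is correct, and the margin statement gives a uniform gap $\gamma_L$. Content-neutrality (S3) ensures that the content term in \eqref{eq:logit} does not perturb these comparisons, and (S1) ensures that the positional logits are the whole story. The final step combines the two parts: every query row is correct with margin at least $\gamma_0$ or at least $\gamma_L$, so the minimum margin over all rows is at least $\min(\gamma_0,\gamma_L)$. This gives the claim for any realizing $W$, with no use of the selection rule.
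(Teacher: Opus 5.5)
Your main line is the paper's proof. The nesting $\Phimat_{L_0}\subseteq\Phimat_L$ (first $L_0$ rows) makes the masked logits of every old row identical at both lengths, (ii) handles the new rows, and the margins combine to $\min(\gamma_0,\gamma_L)$.

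The gap is in the extra step you add for length-dependent targets, which you yourself flag as the real obstacle.

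First, (ii) never scores old rows. $\mathcal{S}_L$ is by definition the query region intersected with $[L_0,L)$, so your sentence saying the query region's intersection with the old rows ``is exactly what hypothesis (ii) scores'' is false.

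Second, the rows left uncovered cannot be handled by bookkeeping. Take copy with $L_0<L\le 2L_0-2$, e.g.\ $L_0=20$, $L=30$. The rows $i\in[\lceil L/2\rceil,L_0)$ are query rows at length $L$, and they were query rows at $L_0$ too. By your own nesting identity their logits are the length-$L_0$ logits, so (i) forces their strict argmax to $i-\lceil L_0/2\rceil$. That differs from $t_L(i)=i-\lceil L/2\rceil$ whenever $\lceil L/2\rceil\ne\lceil L_0/2\rceil$, so these rows are necessarily wrong at length $L$.

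Both hypotheses can hold at the same time. Take a one-hot feature map $\varphi(i)=e_i$ with $m\ge L$, so every $W_{ij}$ is a free logit, and choose the old and new rows' maxima independently. Then (i) and (ii) hold, yet the conclusion fails; reverse behaves the same way. So under your own reading of ``solves'' as correctness on the whole query region at length $L$, no convention from (S2) repairs this.

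What is needed is an explicit extra hypothesis: $t_L(i)=t_{L_0}(i)$ for every $i<L_0$ that is a query position at length $L$. This holds for prev, skip-$k$ and first. For copy and reverse it holds exactly when no old row is a query row at length $L$, i.e.\ $\lceil L/2\rceil\ge L_0$, equivalently $L\ge 2L_0-1$.

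With that hypothesis your argument is complete and coincides with the paper's. The paper's own proof makes the same assumption tacitly when it declares the old rows correct ``by (i)'' and then asserts their maximizer is $t_L(i)$. You were right to notice the issue, but the fix is to restrict the statement, not to reinterpret what (S2) or $\mathcal{S}_L$ covers.
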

\begin{proof}
New rows $i\in[L_0,L)$ are correct with margin $\gamma_L$ by (ii), since $\Phimat_L W \Phimat_L^\top$ are
exactly the logits $W$ produces at length $L$. For old rows $i<L_0$, the causal mask restricts
competitors to $j\le i<L_0$, so the length-$L$ logits there are \emph{identical} to the length-$L_0$
logits $\varphi(i)^\top W\varphi(j)$ (keys $j\in[L_0,i]$ do not exist when $i<L_0$); by (i) those rows
are correct with margin $\gamma_0$. Hence every row is correct.
\end{proof}
\noindent The proof uses no property of $W$ beyond (i)--(ii); in particular it does not
require the min-norm interpolant. This matters for the experiments: the reported $\alphaL$ is computed at
the regularized cross-entropy fit $W_{\mathrm{ce}}$, and Theorem~\ref{thm:suff} applies verbatim to $W_{\mathrm{ce}}$
\emph{whenever its $\alpha_L$ is $1$} -- so the theorem certifies the \emph{fitted kernel} we actually
compute . It does not certify the
trained transformer, a different object; how often $\alpha_L(W_{\mathrm{mm}}){=}1$ co-occurs with
trained-model LG is the empirical question of \S\ref{sec:exp}. Theorem~\ref{thm:suff} is intentionally
elementary: it isolates why the transfer score $\alpha_L$ is meaningful, namely that the earlier,
already-seen rows keep the same masked attention pattern at longer lengths, and the substantive open question is which $(\varphi,t)$ make the $L_0$-fit reach
$\alpha_L{=}1$ (Theorem~\ref{thm:growing}, Prop.~\ref{prop:sin} answer the negative side).

All three propositions below were obtained under independent adversarial verification (numeric
counterexample search $+$ proof audit); we state their \emph{corrected} cores and record every defect in
\S\ref{sec:limits}.

\begin{proposition}[Affine features force endpoint argmax]
\label{prop:static}
Assume (S1)--(S3) with $m=2$, $\varphi(i)=[1,i]^\top$. For every $W$ and row $i$, the logit
$a_{ij}=c_0(i)+c_1(i)\,j$ is affine in the key index $j$; hence the unique masked maximizer is the right
endpoint $j=i$ if $c_1(i)>0$, the left endpoint $j=0$ if $c_1(i)<0$, and a flat tie (margin $0$) if
$c_1(i)=0$. Thus \emph{no strict-interior key $0<j<i$ is ever the unique positive-margin argmax}, and any
interior-target task (e.g.\ prev-token, $i\ge2$) is unrealizable at every length $n\ge3$ for any $W$.
\end{proposition}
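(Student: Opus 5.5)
The plan is to compute the logit directly from the bilinear form and then use elementary facts about affine functions on an integer interval. Write $W=\begin{pmatrix} w_{00} & w_{01}\\ w_{10} & w_{11}\end{pmatrix}$. For $\varphi(i)=[1,i]^\top$ and $\varphi(j)=[1,j]^\top$,
\begin{equation*}
a_{ij}=\varphi(i)^\top W\varphi(j)=(w_{00}+w_{10}\,i)+(w_{01}+w_{11}\,i)\,j .
\end{equation*}
So with $c_0(i):=w_{00}+w_{10}i$ and $c_1(i):=w_{01}+w_{11}i$, the logit is affine in the key index $j$ for each fixed query row $i$. By (S3), the content term contributes nothing on evaluated rows, and by (S1) there is no other channel. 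Hence this expression is the entire masked logit over the causal key set $\{0,1,\dots,i\}$.

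Next I would do a three-way case split on the sign of $c_1(i)$.
\begin{itemize}
\item If $c_1(i)>0$, then $a_{ij}$ is strictly increasing in $j$. The unique maximizer over $\{0,\dots,i\}$ is $j=i$, with margin $a_{ii}-a_{i,i-1}=c_1(i)>0$.
\item If $c_1(i)<0$, the maximizer is symmetrically $j=0$, with margin $-c_1(i)$.
\item If $c_1(i)=0$, every key has the same logit. The row is a flat tie with margin $0$, which (S2) and the strict-argmax convention of \eqref{eq:alphadef} count as a failure.
\end{itemize}
In every case, a strict-interior key $0<j<i$ is never the unique positive-margin maximizer. For a flat tie there is no unique maximizer at all; otherwise the maximizer is an endpoint, and an interior $j$ differs from both endpoints once $i\ge 2$.

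Finally, for an interior-target task such as prev-token, the target is $t(i)=i-1$. This satisfies $0<t(i)<i$ exactly when $i\ge2$, and such a row exists at any length $n\ge3$ (take $i=2$). Since the argument above holds for every $W$, that row cannot be solved with positive margin, so the task is unrealizable at length $n$ under (S2).

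There is no real obstacle. The only points needing care are bookkeeping ones:
\begin{itemize}
\item Argue that (S3) justifies dropping the content term, so the conclusion is genuinely about the full logit on evaluated rows.
\item Make explicit that ties count as failures; this is what handles the $c_1(i)=0$ case.
\item Check the boundary $n\ge3$, since rows $i\le1$ have no strict-interior key.
\end{itemize}
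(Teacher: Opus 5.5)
Your proposal is correct and follows essentially the paper's own route. Like the paper, you expand $\varphi(i)^\top W\varphi(j)$ into $c_0(i)+c_1(i)\,j$, use monotonicity of an affine function on $\{0,\dots,i\}$ to force an endpoint maximizer, treat $c_1(i)=0$ as a margin-$0$ tie excluded by (S2), and rely on the causal competitor set being length-independent. The only cosmetic difference is in the interior case: the paper argues by direct contradiction ($t\,c_1(i)\ge0$ and $(t-i)\,c_1(i)\ge0$ force $c_1(i)=0$), whereas you read it off your sign case split, with the margins $\pm c_1(i)$ made explicit.
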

\noindent This is not a strawman: canonical ALiBi's positional basis is exactly affine in the offset,
$\{1,(i-j)\}$ with a sign-constrained slope, so Prop.~\ref{prop:static} is why ALiBi-prev has
$\alphaL{=}0$ in our experiments (\S\ref{sec:exp}) -- a single ALiBi head cannot place an interior
argmax.
\begin{proof}
$a_{ij}$ affine in $j\in\{0,\dots,i\}$ is monotone, so its maximizer is an endpoint, unique unless
$c_1(i)=0$. An interior target $0<t<i$ would need $a_{it}-a_{i0}=t\,c_1(i)>0$ and
$a_{it}-a_{ii}=(t-i)\,c_1(i)>0$, forcing $c_1(i)>0$ and $c_1(i)<0$ simultaneously -- a contradiction.
The competitor set is $\{0,\dots,i\}$ at every length, so the conclusion is length-independent.
\end{proof}

\begin{proposition}[Learned-absolute underdetermination]
\label{prop:learnedabs}
For learned-absolute PE with $\varphi(i)=v_i$ fixed for $i<L_0$ and i.i.d.\ absolutely-continuous for
$i\ge L_0$, and any non-degenerate selection rule $W=S(v_0,\dots,v_{L_0-1})$ depending on trained rows
only, the new-row argmax $M_i=\arg\max_{j\le i}v_i^\top W\varphi(j)$ has a law functionally
\emph{independent} of the new-key labels. Write $N_i=\{L_0\le j<i\}$ for the off-diagonal new keys and
$k=|\{L_0\le j\le i\}|$, and assume the row's target is one of them, $t_L(i)\in N_i$, with $|N_i|=k-1\ge1$.
Then
\begin{equation}\label{eq:prop2}
\Pr[M_i=t_L(i)\mid M_i\in N_i]=\frac{1}{k-1},
\qquad\text{and hence}\qquad
\Pr[M_i=t_L(i)]=\frac{\Pr[M_i\in N_i]}{k-1}\ \le\ \frac{1}{k-1}.
\end{equation}
Extrapolation is therefore at or below chance on such rows: the selection rule carries no information
about which untrained key is the target. The hypothesis $t_L(i)\in N_i$ is what makes the statement
about the untrained coordinates; it holds for the moving targets of prev and skip-$k$ once
$i\ge L_0+k$, which is the case the proposition is used for.
\end{proposition}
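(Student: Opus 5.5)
The plan is an exchangeability argument. Conditional on the trained embeddings $v_0,\dots,v_{L_0-1}$, the selection rule fixes $W=S(v_0,\dots,v_{L_0-1})$. If $i<L_0$ the query vector $v_i$ is fixed too; if $i\ge L_0$ it is the diagonal new embedding. The new-key embeddings $\varphi(j)$ for $j\in N_i$ are i.i.d.\ absolutely continuous and independent of $W$, of $v_i$, and of the diagonal key $\varphi(i)$. So for every $j\in N_i$ the logits $Z_j:=v_i^\top W\varphi(j)$ are i.i.d.\ conditionally on $(W,v_i,\varphi(i))$. The old-key logits $j<L_0$ and the diagonal logit are functions of the conditioning variables alone, so their law does not depend on which new key is labelled the target. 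This is the \emph{functional independence} claim. The task labels $t_L(i)$ enter only through an index and play no role in the law of $(Z_j)_{j\in N_i}$.

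The key step is to show that the joint law of all logits in row $i$ is invariant under every permutation $\pi$ of $N_i$, with the remaining keys fixed. Invariance holds because permuting i.i.d.\ coordinates preserves their joint law, and the non-$N_i$ logits are unchanged. Hence $\Pr[M_i=j]$ is the same for all $j\in N_i$. Ties must be handled here. Non-degeneracy of $S$ should mean $v_i^\top W\neq 0$ almost surely. Then each $Z_j$ is a nonzero linear functional of an absolutely continuous vector, so it is itself absolutely continuous. It follows that ties among the $Z_j$, or between a $Z_j$ and any fixed old or diagonal logit, occur with probability zero. The strict argmax convention of \eqref{eq:alphadef} therefore selects a single key almost surely, and on the event $\{M_i\in N_i\}$ it selects one of the $k-1$ symmetric keys.

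Symmetry gives $\Pr[M_i=j]=\Pr[M_i\in N_i]/(k-1)$ for each $j\in N_i$. Since $t_L(i)\in N_i$ by hypothesis, we obtain $\Pr[M_i=t_L(i)]=\Pr[M_i\in N_i]/(k-1)$. Dividing by $\Pr[M_i\in N_i]$, when it is positive, yields the conditional statement \eqref{eq:prop2}. The bound $\le 1/(k-1)$ follows from $\Pr[M_i\in N_i]\le1$. The last step is to integrate out the conditioning on $(v_{<L_0})$ and, when $i\ge L_0$, on $\varphi(i)$. Both identities hold conditionally, so they survive averaging. For the closing remark about prev and skip-$k$, I will check that $t_L(i)=i-k'$ lies in $[L_0,i)$ exactly when $i\ge L_0+k'$.

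The main obstacle is making ``non-degenerate'' precise enough to exclude ties. In particular the diagonal key $j=i$ is not exchangeable with $N_i$, because it uses the same vector $\varphi(i)$ as the query. I would define non-degeneracy as $\Pr[v_i^\top W=0]=0$ and restrict all symmetry claims to $N_i$. This restriction is exactly why the statement conditions on $M_i\in N_i$ rather than claiming $1/k$.
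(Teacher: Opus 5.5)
Your argument is correct and is essentially the paper's. Conditional on $W$ and the query $v_i$ (which is also the diagonal key), the logits $Z_j=v_i^\top Wv_j$, $j\in N_i$, are i.i.d.\ while the old-key and self-key logits are fixed. So the disjoint events $\{M_i=j\}$, $j\in N_i$, are equiprobable, and \eqref{eq:prop2} follows after averaging.

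The one place you diverge is the handling of ties, and there one sentence is false. The condition $\Pr[W^\top v_i=0]=0$ does \emph{not} make the strict argmax a single key almost surely. The paper's example is an antisymmetric $W\neq0$ with $Wv_0=0$. It meets your non-degeneracy condition, yet gives $v_i^\top Wv_i=v_i^\top Wv_0=0$ for every $v_i$. The self-key and key $0$ therefore tie identically, and with positive probability that tie is at the top. This is why the paper imposes almost-sure uniqueness of $M_i$ as a separate hypothesis.

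You never actually use that sentence, however. What \eqref{eq:prop2} needs is only that ties \emph{involving an $N_i$ key} are null, so that no tie rule can break the exchangeability. You prove that part correctly from the absolute continuity of $Z_j$ given $(W,v_i)$. A top tie among non-$N_i$ keys just yields $M_i\notin N_i$, which counts as a miss. With the overclaim deleted, your route is marginally stronger than the paper's: it replaces the global uniqueness hypothesis by the weaker condition $\Pr[W^\top v_i=0]=0$.
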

\begin{proof}[Proof sketch]
For new keys $g(j)=Wv_j$ depends only on the independent $v_j$; the map
$v_i\mapsto\arg\max_j v_i^\top g(j)$ is a fixed function in which the label $t_L(i)$ never appears, and
$S$ (trained-rows only) cannot encode it. Conditioned on $W$, the off-diagonal $g(j)$, $j\in N_i$, are
i.i.d.\ and independent of $v_i$, so their indices are exchangeable and the argmax is uniform on $N_i$
given that the maximum is attained there, which is \eqref{eq:prop2}. We assume as a hypothesis that the maximizer $M_i$ is almost surely unique. This does not follow from
$W\neq0$, nor from $W$ merely being injective on the compared keys: for an antisymmetric $W$ with
$Wv_0=0$ one has $v_i^\top Wv_i=v_i^\top Wv_0=0$ identically, so the self-key and the trained key $0$ tie
for every $v_i$ (App.~\ref{app:proofs}).
\end{proof}
\noindent\emph{Why both hypotheses are needed.} The probability in \eqref{eq:prop2} is conditional
because the trained (old) keys and the self-key $j=i$ are not exchangeable with the new keys:
the self-key contributes the quadratic $v_i^\top Wv_i$, so the global maximum need not lie in $N_i$ at
all. Monte-Carlo confirms both parts: at $L=30$ the conditional hit rate is $0.115$ against
$1/(k-1)=0.111$, while the unconditional rate is $0.034$ because the winner lands in $N_i$ only $30\%$ of
the time; at $L=60$ the figures are $0.025$ / $0.026$ / $0.016$ with $65\%$
(\texttt{experiments/verify\_prop2\_conditional.py}). The unconditional hit rate is therefore \emph{at or
below} chance. The hypothesis $t_L(i)\in N_i$ is equally necessary: if the target is an \emph{old} key
(as in attend-first, where $t_L(i)=0$), the trained keys are precisely the ones the $L_0$-fit has shaped,
and the hit probability can exceed $1/(k-1)$. With a min-norm $W$ fitted on the trained rows we
measure $\Pr[M_i=0]$ at up to $5.2\times$ the bound (feature dimension $m=64$, $L=128$; the ratio grows
with $m$ and $L$ and is below $1$ for small $m$; \texttt{experiments/verify\_prop2\_hypothesis.py}). That
regime is governed by Prop.~\ref{prop:duality} instead. Finally, a degenerate selection
such as $W=0$ makes every logit equal, which is why almost-sure uniqueness needs non-degeneracy.
\noindent\emph{This is underdetermination rather than an empty constraint cone:} a rescuing $W$ realizing the target on
new rows exists with positive probability ($\approx20$--$35\%$ in Monte-Carlo); the inductive-bias rule
is simply blind to the untrained coordinates and cannot select it.

\begin{proposition}[Sinusoidal: a realizing kernel exists but is not selected]
\label{prop:sin}
For sinusoidal absolute PE with incommensurate frequencies: (A)~for any \emph{fixed} offset $\delta$ a
$W$ realizing it exists at all lengths (the multi-frequency kernel has a unique global max at $d=\delta$,
disambiguated arbitrarily far beyond the lowest-frequency period); (B)~for \emph{growing} offsets
(copy/reverse), no translation-invariant kernel \emph{with a strict unique global maximizer} tracks a
length-dependent offset (Theorem~\ref{thm:growing}), so $\alpha_L\to0$; empirically the fitted kernel
gives $\alpha_L\approx0$ at every length we test.
\end{proposition}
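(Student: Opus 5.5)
For (A) I would give an explicit construction rather than appeal to a fit. Write $\varphi(i)=[\cos\omega_k i,\sin\omega_k i]_{k=1}^{K}$, possibly with a constant coordinate, and take $W=\bigoplus_k \beta_k R(\omega_k\delta)^\top$, where $R(\theta)$ is the $2\times 2$ rotation and $\beta_k>0$. The identity $\varphi_k(i)^\top R(\theta)^\top\varphi_k(j)=\cos(\omega_k(i-j)-\theta)$ gives the logit
\begin{equation}
a_{ij}=f(i-j)=\sum_{k}\beta_k\cos\bigl(\omega_k(d-\delta)\bigr),\qquad d=i-j .
\end{equation}
This kernel is translation-invariant and depends only on $d$. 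Every term is maximized at $d=\delta$, so $f(\delta)=\sum_k\beta_k$. For any integer $d\neq\delta$, attaining the same value would require $\omega_k(d-\delta)\in 2\pi\mathbb{Z}$ for every $k$. With incommensurate frequencies (some $\omega_k/\pi$ irrational, or a pair with $\omega_k/\omega_l\notin\mathbb{Q}$) this cannot happen at any integer $d\neq\delta$. So $d=\delta$ is the strict unique maximizer on the integers. This holds beyond the lowest-frequency period, because the disambiguation is arithmetic rather than coming from a bounded window.

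The competitor set in row $i$ is the offsets $d\in\{0,\dots,i\}$, which contains $\delta$ once $i\ge\delta$. Since $f$ does not depend on $i$ or $n$, the same $W$ realizes the offset in every row $i\ge\delta$ at every length. Rows $i<\delta$ have no target and are not scored.

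The margin is the main obstacle. The quantity $\gamma_i=f(\delta)-\max_{0\le d\le i,\,d\ne\delta}f(d)$ is positive for each finite $i$, but by Kronecker/Weyl equidistribution it tends to $0$ as $i\to\infty$: near-returns $\omega_k(d-\delta)\approx 0 \pmod{2\pi}$ for all $k$ simultaneously do occur. Under (S2), "solves length $n$" only requires a positive margin at that $n$, so for each $n$ the margin $\min_{i<n}\gamma_i>0$ suffices. I will state this explicitly and not claim a uniform margin. If a uniform margin were wanted, I would instead add a decay coordinate, which plain sinusoidal PE lacks, so I would not pursue it.

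For (B), the kernel built in (A) is translation-invariant, so it lies in the relative family of Theorem~\ref{thm:growing}. For copy, $\delta_n(i)=\lceil n/2\rceil$; for reverse, the offset $2i-n+1$ varies across rows. Neither is constant on a density-one set of deep rows uniformly in $n$, so Theorem~\ref{thm:growing} gives $\alpha_L\to0$ for any such kernel with a strict unique global maximizer. One caveat: a general $W$ on sinusoidal features is not translation-invariant, since $\cos\omega i\cos\omega j$ terms appear. Part (B) is stated, and will be proved, only for the translation-invariant subfamily. The statement "empirically $\alpha_L\approx0$" is an observation, not something to prove, so I will point to \S\ref{sec:exp} for it.
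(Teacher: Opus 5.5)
Your argument follows the paper's own route. Part (A) uses the same translation-invariant kernel $\sum_k\beta_k\cos(\omega_k(d-\delta))$, whose maximum at $d=\delta$ is unique because equality would force every phase into $2\pi\mathbb{Z}$, and part (B) is the same appeal to the deep-row lemma of Theorem~\ref{thm:growing}. Three of your additions are refinements the paper leaves unstated: the explicit block-rotation $W$, the exact arithmetic condition (one $\omega_k/2\pi$ irrational), and the observation that the margin is positive at every length but tends to $0$ by simultaneous Diophantine approximation.

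Two small repairs are needed. First, with the usual counterclockwise $R(\theta)$ the block must be $\beta_kR(\omega_k\delta)$, not its transpose; the transpose gives $\cos(\omega_k(d+\delta))$, which peaks at $d=-\delta$. Second, in (B), failing the density condition only gives $\alpha_L\not\to1$, as the parity task shows by sitting at $1/2$. To reach $\alpha_L\to0$ you need the explicit count. Every deep row realizes $c(\kappa)$, which misses every copy row once $L>2c(\kappa)$. Since the reverse offset is injective in $i$, $c(\kappa)$ hits at most one reverse row per length. Both counts are negligible against $|\mathcal{S}_L|\to\infty$.
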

\noindent\emph{The qualifier in (B) is necessary, and the sinusoidal family itself shows why.} A
translation-invariant kernel that is \emph{monotone increasing} over the accessible offset range has no
interior maximizer, and the causal window then hands it the largest available offset for free. Within the
sinusoidal span with the standard $16$ frequencies and base $10^4$, the single component
$\kappa(d)=\sin(d\theta_{\min})$ with $\theta_{\min}\approx1.78\cdot10^{-4}$ is increasing for
$d\lesssim8833$; at $L=17665$ the demanded copy offset is exactly $8833$ and this kernel attains
$\alpha_L=1$. This is far outside the lengths we study (our sinusoidal experiments reach $L=128$, where
the kernel is nowhere near its turning point), but it shows the impossibility must be stated for kernels
with a global peak, not for translation invariance alone.
\noindent\emph{In plain terms:} sinusoidal PEs are not impossible in principle for a fixed offset
since a realizing kernel exists at every length. What fails is \emph{selection}: the
training-length fit does not pick that kernel. We are therefore not claiming ``sinusoidal
fixed-offset cannot generalize''; we are isolating where it does and does not.
\noindent\emph{Existence $\ne$ selection.} The transfer operator's $L_0$-fit $W^*$ does not
recover the fixed-offset rescue: sinusoidal prev decays ($\alpha_L=0.45$ at $L{=}60$, $0.34$ at
$L{=}128$). The common intuition that a sinusoidal encoding must fail once the required offset exceeds
its lowest-frequency period is thus \emph{false}; the correct statement is conditional, and the sub-unit
transferred accuracy is exactly $\alpha_L$ measuring the existence-vs-selection gap.

We now give a genuine \emph{necessity} theorem for the most important failure case -- growing offsets
under relative encodings -- which explains the $\alpha_L{=}0$ copy/reverse cells of RoPE/ALiBi.

\begin{theorem}[Relative-head deep-row characterization]
\label{thm:growing}
Assume (S1)--(S3). Let $\mathcal{S}_L$ denote the scored rows (the task's query positions in $[L_0,L)$),
so that $\alpha_L$ is the fraction of $\mathcal{S}_L$ on which the masked argmax hits the target; we use
this one normalization throughout, matching the implementation. Fix a relative (translation-invariant)
family $\mathcal{K}$ of kernels $a_{ij}=\kappa(i-j)$, and consider those $\kappa\in\mathcal{K}$ that
admit a \emph{strict unique global maximizer} $c(\kappa)=\arg\max_{d\ge0}\kappa(d)$, under a fixed tie
rule. (A constant kernel has no strict maximizer, so NoPE, whose family is spanned by the constant, is
excluded.) Call a scored row \emph{deep} if $i\ge c(\kappa)$; the remaining scored rows form a set
$\mathcal{H}(\kappa)\subseteq[L_0,c(\kappa))$ of size at most $c(\kappa)-L_0$, a bound independent of $L$.
Assume the scored set grows, $|\mathcal{S}_L|\to\infty$, and write $\mathcal{K}_0$ for the kernels in
$\mathcal{K}$ that \emph{realize the task at $L_0$} with positive margin, which is the hypothesis
Theorem~\ref{thm:suff} needs in order to turn $\alpha_L$ into length generalization.
Then \textbf{some $\kappa\in\mathcal{K}_0$ with a strict unique global maximizer achieves
$\alpha_L\to1$ if and only if there is an offset $\delta^*$ with}
\begin{equation}\label{eq:thm2cond}
\text{\emph{(i) density:}}\quad
\frac{\#\{i\in\mathcal{S}_L:\ \delta_L(i)\neq\delta^*\}}{\#\mathcal{S}_L}\xrightarrow[L\to\infty]{}0,
\qquad\text{\emph{(ii) attainability:}}\quad \delta^*=c(\kappa)\ \text{for some }\kappa\in\mathcal{K}_0.
\end{equation}
For a given $\kappa$, $\alpha_L\to1$ iff \eqref{eq:thm2cond}(i) holds with $\delta^*=c(\kappa)$.
Both hypotheses are needed. Attainability must be inside $\mathcal{K}_0$ rather than $\mathcal{K}$: a
kernel peaked at the new rows' offset but failing at $L_0$ has $\alpha_L\to1$ without solving the task at
either length, so the existential quantifier alone would not deliver length generalization. And the proof
compares $|\mathcal{H}(\kappa)|$ against $|\mathcal{S}_L|$, which is informative only when the scored set
grows.
The exact conclusion $\alpha_L=1$ requires more than \eqref{eq:thm2cond}: the realized offset must hit the
target on every scored row, including the finitely many rows in $\mathcal{H}(\kappa)$, where the
causal window clips the argmax below $c(\kappa)$. Consequently fixed-relative-offset tasks (prev,
skip-$k$) are realizable by any family containing a kernel whose global maximizer is at $k$, while
growing-offset (copy) and within-length-varying (reverse) tasks are not: $\alpha_L\to0$ for every
$\kappa\in\mathcal{K}$ with a strict unique global maximizer.
\end{theorem}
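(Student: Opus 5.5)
The whole argument rests on one lemma about where the masked argmax lands on a deep row. Fix $\kappa$ with strict unique global maximizer $c=c(\kappa)$ over $d\ge0$. On row $i$ the competitors $j\in\{0,\dots,i\}$ correspond to offsets $d=i-j\in\{0,\dots,i\}$. If $i\ge c$, the global maximizer $d=c$ lies in this window. Since it is strict and unique, it beats every other admissible offset by a positive margin, independently of the tie rule. So on every deep row the masked argmax is exactly $j=i-c$, and row $i$ is counted correct iff $\delta_L(i)=c$. Rows in $\mathcal{H}(\kappa)$ are scored rows with $L_0\le i<c$, so there are at most $c-L_0$ of them, a constant in $L$. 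This yields the per-kernel sandwich
\[
\frac{\#\{i\in\mathcal{S}_L:\delta_L(i)=c\}-|\mathcal{H}(\kappa)|}{|\mathcal{S}_L|}\;\le\;\alpha_L\;\le\;\frac{\#\{i\in\mathcal{S}_L:\delta_L(i)=c\}+|\mathcal{H}(\kappa)|}{|\mathcal{S}_L|}.
\]
The error terms $|\mathcal{H}(\kappa)|/|\mathcal{S}_L|$ vanish because $|\mathcal{S}_L|\to\infty$. The per-kernel claim follows at once: $\alpha_L\to1$ iff the fraction of scored rows with $\delta_L(i)\neq c(\kappa)$ tends to $0$, which is (i) with $\delta^*=c(\kappa)$.

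The existential characterization is then bookkeeping over $\mathcal{K}_0$. For ($\Leftarrow$), take the $\kappa\in\mathcal{K}_0$ promised by (ii) with $c(\kappa)=\delta^*$. By (i) and the per-kernel claim, $\alpha_L\to1$. For ($\Rightarrow$), let $\kappa\in\mathcal{K}_0$ have a strict maximizer and $\alpha_L\to1$, and set $\delta^*:=c(\kappa)$. The per-kernel claim gives (i), and (ii) holds by construction. I will also record why the quantifier ranges over $\mathcal{K}_0$. Theorem~\ref{thm:suff} needs realization at $L_0$ with margin, and the asymptotic statement is about new rows only. Exact $\alpha_L=1$ additionally needs the finitely many clipped rows in $\mathcal{H}(\kappa)$ to be correct, where the argmax is the best offset in $\{0,\dots,i\}$ rather than $c$.

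For the corollaries I apply the density criterion to concrete $\delta_L$. For prev and skip-$k$, $\delta_L\equiv k$, so (i) holds with $\delta^*=k$ whenever the family contains a realizing kernel peaked at $k$. For copy, $\delta_L(i)=\lceil L/2\rceil$ on the scored rows. For any fixed $c$, once $\lceil L/2\rceil>c$ every deep row misses, so $\alpha_L\le|\mathcal{H}(\kappa)|/|\mathcal{S}_L|\to0$. For reverse, $\delta_L(i)=2i-L+1$ (or similar) is injective in $i$, so each value, including $c$, is hit on at most one row. Hence $\alpha_L\le(1+|\mathcal{H}(\kappa)|)/|\mathcal{S}_L|\to0$.

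The main obstacle is the deep-row lemma's dependence on the domain of the maximizer. The global maximizer must be taken over all $d\ge0$, not just over the lengths seen, and strictness must survive restriction to the causal window $\{0,\dots,i\}$. This is exactly why monotone kernels, as in the remark after Prop.~\ref{prop:sin}, must be excluded. I would state carefully that ``strict unique global maximizer'' means $\kappa(c)>\sup_{d\ne c}\kappa(d)$, so that a positive margin exists on every deep row. A second point to check is that the scored-set normalization for copy and reverse, namely the second half beyond $L_0$, still has $|\mathcal{S}_L|\to\infty$.
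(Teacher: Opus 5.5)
Your proof is correct, and its core matches the paper's argument. On any row $i\ge c(\kappa)$ the window contains $c(\kappa)$, so the masked argmax offset is $c(\kappa)$ under either tie rule. Your sandwich is the paper's counting formula with $0\le r(\kappa)\le|\mathcal{H}(\kappa)|$. Both directions follow by setting $\delta^*=c(\kappa)$, and copy fails because every scored row is deep and misses once $\lceil L/2\rceil>c(\kappa)$.

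The one genuinely different step is reverse.
\begin{itemize}
\item You use that $\delta_L(i)=2i-2\lceil L/2\rceil+1$ is injective in $i$. So at most one scored row can demand offset $c(\kappa)$, and $\alpha_L\le(1+|\mathcal{H}(\kappa)|)/|\mathcal{S}_L|=O(1/L)$.
\item The paper instead runs a nested-window argument at $L_0=20$. Rows $18$ and $17$ demand offsets $17$ and $15$, each lying in the other's window, which forces $\kappa(17)>\kappa(15)$ and $\kappa(15)>\kappa(17)$.
\end{itemize}
The paper's route needs no maximizer hypothesis and shows reverse is unsolvable already in distribution, so $\mathcal{K}_0=\emptyset$ and the existential criterion fails through (ii). By itself, however, it does not show $\alpha_L\to0$ for every kernel with a strict unique global maximizer, which is what the statement's last sentence asserts. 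Your injectivity count shows directly that no $\delta^*$ satisfies (i), and it gives that per-kernel conclusion with an explicit rate.

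Two small remarks.
\begin{itemize}
\item Your proposed strengthening $\kappa(c)>\sup_{d\neq c}\kappa(d)$ is unnecessary. Each causal window is finite and contains $c$, so mere uniqueness already gives a positive margin on every deep row. That margin may shrink as $i$ grows, but $\alpha_L$ only counts strict hits, and Theorem~\ref{thm:suff} only needs a positive margin at each fixed $L$.
\item For prev and skip-$k$, your qualifier ``realizing'' is automatic. A kernel whose strict unique global maximizer is at $k$ hits every query row $i\ge k$ at every length, including $L_0$. So it lies in $\mathcal{K}_0$ and attains $\alpha_L=1$ exactly, which is the unconditional form the statement asserts.
\end{itemize}
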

\begin{proof}[Proof sketch (full proof, both tie rules, App.~\ref{app:proofs})]
\emph{Deep-row constancy of the realized offset:} for a deep row $i\ge c(\kappa)$ the causal window
$[0,i]$ contains $c(\kappa)$, and $c(\kappa)$ maximizes $\kappa$ over \emph{all} offsets $d\ge0$, hence
over the window; so the masked argmax offset equals the constant $c(\kappa)$, independent of $n$. (This
is why the maximizer must be global: a merely local peak could be beaten by a larger offset that the
window admits at long lengths.) A $\kappa$ therefore hits a deep row iff $c(\kappa)$ equals that row's
required offset, so
\begin{equation}\label{eq:alpha-count}
\alpha_L=\frac{\#\{i\in\mathcal{S}_L,\ i\ge c(\kappa):\ \delta_L(i)=c(\kappa)\}+r(\kappa)}{\#\mathcal{S}_L},
\qquad 0\le r(\kappa)\le|\mathcal{H}(\kappa)|,
\end{equation}
where $r(\kappa)$ counts the finitely many clipped rows that happen to be correct.
($\Leftarrow$) Given (ii), take $\kappa$ with $c(\kappa)=\delta^*$. By \eqref{eq:alpha-count} the deep
rows it misses are exactly those with $\delta_L(i)\neq\delta^*$, a vanishing fraction by (i), and
$|\mathcal{H}(\kappa)|/\#\mathcal{S}_L\to0$ since $\mathcal{H}(\kappa)$ is fixed while
$\#\mathcal{S}_L\to\infty$. Hence $\alpha_L\to1$.
($\Rightarrow$) Suppose $\alpha_L\to1$ for such a $\kappa$. Put $\delta^*:=c(\kappa)$, attainable by
construction, giving (ii); by \eqref{eq:alpha-count} the fraction of scored rows with
$\delta_L(i)\neq\delta^*$ is $1-\alpha_L+o(1)\to0$, giving (i). For the corollaries, no $\delta^*$
satisfies (i) when the offset \textbf{grows} (copy, $\delta_L=\lceil L/2\rceil$: for $L>2c(\kappa)$
every copy row is deep with realized offset $c(\kappa)$, so the missed fraction tends to $1$) or
\textbf{varies within a length} with positive density (reverse: a nested-window monotonicity argument
forces failure already at $L_0$).
\end{proof}
\noindent The hypotheses are each necessary, and the scope of the characterization is discussed with counterexamples in App.~\ref{app:thm2remarks}.

\begin{proposition}[Absolute/relative duality]
\label{prop:duality}
Assume (S1)--(S3). \emph{(a) Relative side.} A translation-invariant family containing a kernel whose
strict unique global maximizer is at $k$ length-generalizes the fixed relative offset $k$
(prev/skip-$k$: $\alpha_L{=}1$ at all lengths). On a fixed absolute position the demanded offset $i-t$
grows with the row, so by Theorem~\ref{thm:growing} no kernel \emph{with a strict unique global
maximizer} attains $\alpha_L\to1$; and empirically the kernel the $L_0$-fit selects decays,
$\alpha_L=0.80,0.40,0.20,0.12,0.055,0.039$ at $L=30,40,60,128,256,512$ for RoPE on attend-first.
\emph{(b) Absolute side, for the clamped learned-absolute encoding used in our experiments.} Let
$\varphi(i)=v_{\min(i,L_0-1)}$, so all positions $i\ge L_0$ share one feature. Such a head
length-generalizes a fixed \emph{absolute} position in the trained range ($\alpha_L{=}1$ at all lengths,
with an $L$-independent margin), and fails on a fixed relative offset: every new key carries the same
feature, so the logits over $j\ge L_0$ tie exactly and the target cannot be singled out
($\alpha_L{=}0$ with margin $0$). For the i.i.d.-extended variant the corresponding statement is
probabilistic and is Prop.~\ref{prop:learnedabs}. \emph{(c)} Neither side generalizes a growing offset
(copy/reverse), again for kernels with a strict unique global maximizer.
Empirically (App.~\ref{app:proofs}): attend-first clamped-absolute $1.00$/relative $0.00$; prev
clamped-absolute $0.00$ (an exact tie)/relative $1.00$. The i.i.d.-extended variant instead scores at the
chance level of Prop.~\ref{prop:learnedabs} ($\approx0.05$ in our Monte-Carlo), as it should: the two
objects fail for different reasons.
\end{proposition}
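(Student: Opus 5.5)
The proof splits along the three parts of Proposition~\ref{prop:duality}. Each part reduces either to Theorem~\ref{thm:growing} together with Theorem~\ref{thm:suff}, or to a direct computation of the logits on new rows. The empirical numbers quoted in the statement are measurements, not claims to be proved, so the argument covers only the structural assertions.

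\emph{(a) Relative side.} Let $\kappa$ be a translation-invariant kernel whose strict unique global maximizer is $c(\kappa)=k$. For prev/skip-$k$ the demanded offset is $\delta_L(i)=k$ on every scored row, so density condition (i) of Theorem~\ref{thm:growing} holds trivially with $\delta^*=k$. For the stronger claim $\alpha_L=1$ at every length, we need the rows to be deep. Every query row of a fixed-offset task satisfies $i\ge k$, since the target $i-k$ must exist. Hence the causal window $[0,i]$ contains offset $k$, the masked argmax offset is exactly $k$ with margin $\kappa(k)-\max_{d\ne k,\,d\le i}\kappa(d)>0$, and the clipped set $\mathcal{H}(\kappa)$ contributes no errors. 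The same argument on rows $i<L_0$ shows that $\kappa$ realizes the task at $L_0$, so $\kappa\in\mathcal{K}_0$. Theorem~\ref{thm:suff} then upgrades this to exact length generalization.

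For attend-first we have $\delta_L(i)=i$. Any offset value $\delta^*$ is demanded on at most one row, so condition (i) fails for every $\delta^*$. The ``only if'' direction of Theorem~\ref{thm:growing} then gives $\alpha_L\not\to1$. In fact the count formula \eqref{eq:alpha-count} shows that at most one deep row is hit, plus $|\mathcal{H}(\kappa)|$ clipped rows, so $\alpha_L\to0$.

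\emph{(b) Clamped absolute side.} With $\varphi(i)=v_{\min(i,L_0-1)}$, a new row $i\ge L_0$ has query feature $v_{L_0-1}$. Its logits against keys $j<L_0$ are therefore exactly those of trained row $L_0-1$. The logits against keys $j\in[L_0,i]$ all equal $v_{L_0-1}^\top W v_{L_0-1}$, which is the logit of key $L_0-1$ in that same trained row.

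For a fixed absolute target $t\le L_0-2$ (in the trained range), condition (i) of Theorem~\ref{thm:suff} at row $L_0-1$ gives $v_{L_0-1}^\top W(v_t-v_j)\ge\gamma_0$ for all $j\le L_0-1$, $j\ne t$. The new keys are copies of key $L_0-1$, so the same margin $\gamma_0$ holds on every new row. This gives $\alpha_L=1$ with an $L$-independent margin, and Theorem~\ref{thm:suff} finishes the argument. The boundary case $t=L_0-1$ must be excluded or treated separately, because then $t$ ties with the copies; I will state the range as $t<L_0-1$.

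For a fixed relative offset on a row $i\ge L_0+k$, the target $i-k$ lies in $[L_0,i)$ and ties exactly with key $i$. So the strict argmax fails with margin $0$. The finitely many rows $i<L_0+k$ need a short separate check, or can be absorbed by noting that they too tie with $j=i$ whenever $i-k\ge L_0$.

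\emph{(c) Growing offsets.} On the relative side, copy and reverse are exactly the corollaries of Theorem~\ref{thm:growing}. On the clamped absolute side, for copy the target $i-\lceil L/2\rceil$ eventually lies among the new keys $\ge L_0$ on a growing fraction of scored rows, and those rows tie as in (b). On the remaining scored rows the target is an old key $t<L_0$ that varies with $i$, but every new query row presents the identical logit vector over old keys. Hence at most one old target can be the strict argmax across all these rows, and the hit fraction goes to $0$. Reverse is handled the same way, since its target also varies with $i$.

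The main obstacle is this last absolute-side bookkeeping: I must verify that the scored copy/reverse rows whose targets fall in the trained range form a vanishing fraction, or at least that their hits are bounded by one per distinct target. The plan there is to count distinct target positions against $|\mathcal{S}_L|\to\infty$.
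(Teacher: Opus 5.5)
Your proposal is correct, and on the relative side it is the paper's argument. Every fixed-offset query row satisfies $i\ge k=c(\kappa)$, so all such rows are deep and $\alpha_L=1$ exactly. Attend-first and copy/reverse then fall to Theorem~\ref{thm:growing}; your explicit count giving $\alpha_L\to0$ for attend-first mildly sharpens the paper's ``$\alpha_L\not\to1$''.

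On the clamped side your route is different and stronger. The paper proves the positive claim of (b) by exhibiting one kernel, the column-$0$-hot kernel with margin $+1$. You observe instead that every new row reproduces trained row $L_0-1$ with key $L_0-1$ replicated. Hence \emph{every} $W\in\mathcal{K}_0$ transfers with its own margin $\gamma_0$, and the result does not depend on the selection rule for this encoding. It does presuppose $\mathcal{K}_0\neq\emptyset$, which is exactly what the paper's explicit kernel supplies (e.g.\ for linearly independent trained features), so the two arguments are complementary.

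Your version also shows that ``in the trained range'' must exclude $t=L_0-1$. There the target ties with every new key, so $\alpha_L=0$. The paper's proof treats only $t=0$ and misses this exception.

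For the clamped half of (c), which the paper only asserts, your replication observation already removes what you call the main obstacle. Every new row sees the same effective logit vector, so its strict argmax, when it exists, is a single old key $j^\star\le L_0-2$ independent of $i$. Copy and reverse targets are injective in $i$, so at most one scored row is hit and $\alpha_L\le 1/|\mathcal{S}_L|\to0$. No counting of trained-range targets is needed.

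One loose end in (b) concerns the relative-offset failure. The self-key tie holds whenever $i-k\ge L_0-1$, not only when $i-k\ge L_0$, because key $L_0-1$ also carries $v_{L_0-1}$. The remaining rows $i\in[L_0,L_0+k-1)$ exist only for $k\ge2$. They are handled by the same observation: when $W\in\mathcal{K}_0$, their strict argmax is the trained target $L_0-1-k\neq i-k$. Without that hypothesis one such row can be hit (take $L=L_0+1$). So exact $\alpha_L=0$ for skip-$k$ with $k\ge2$ needs realizability at $L_0$, whereas for prev it is unconditional.
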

\noindent\emph{Which ``absolute'' object.} Stating part (b) for ``an absolute kernel'' without
qualification would conflate three distinct objects. \emph{(1) An unrestricted absolute kernel} $K[i,j]$, with a free entry for every
pair, is not limited to absolute targets at all: setting $K[i,i{-}1]$ large and the rest small
encodes the sub-diagonal directly, so it realizes prev-token at every length. \emph{(2) A learned-absolute
table extended by i.i.d.\ unseen rows} is at chance on new rows (Prop.~\ref{prop:learnedabs}).
\emph{(3) The clamped learned-absolute implementation} used in our experiments collapses all positions
$i\ge L_0$ onto one feature, so distinct new rows become indistinguishable and a moving target cannot be
resolved. The duality claim is about (2) and (3), the objects a trained model actually has, and part (b)
above now says so. The same care applies on the relative side: sinusoidal \emph{absolute} features can
represent translation-invariant fixed-offset kernels, as Prop.~\ref{prop:sin} itself shows, so
``absolute features'' and ``absolute-only behaviour'' are not the same thing. What separates the regimes
is the realizable kernel family a given encoding induces, which is exactly the quantity
condition~\eqref{eq:thm2cond}(ii) makes explicit.

\noindent\emph{Existence versus selection on the relative side.} It is tempting to say that a relative kernel simply \emph{cannot} reach a fixed absolute position because
that would need a peak at the growing offset $i$. That reasoning is wrong, because the causal window
truncates at $d=i$: a monotone increasing $\kappa$, e.g.\ $\kappa(d)=d$, puts the masked argmax at $j=0$
on every row at every length, so it realizes attend-first exactly. Such a kernel has no
strict unique global maximizer, which is why Theorem~\ref{thm:growing} does not cover it, and its
realizable analogue is present in genuine PE families: within the RoPE span, $\sin(d\theta_{\min})$ is
increasing over the whole range of lengths we study. What rules this out in practice is not
translation invariance but two other things: the canonical ALiBi slope is sign-constrained to be
non-positive (dropping that constraint does let an ALiBi-form kernel solve attend-first, while leaving
prev-token unrealizable as Prop.~\ref{prop:static} requires), and the $L_0$-fit does not select the
monotone solution, so the transferred accuracy decays with $L$ as reported in part~(a). This is the same
existence-versus-selection distinction we draw for sinusoidal encodings in Prop.~\ref{prop:sin}, and it
applies here too.

\begin{figure}[t]
\centering
\resizebox{0.92\textwidth}{!}{%
\begin{tikzpicture}[font=\small,
  hdr/.style={font=\footnotesize\sffamily\bfseries, text=cblue!85!black, align=center},
  rlbl/.style={font=\footnotesize\sffamily\bfseries, text=cblue!85!black, align=center},
  note/.style={font=\scriptsize, text=cgray, align=center, text width=2.85cm},
  cellY/.style={rounded corners=2pt, draw=cgreen!55, fill=cgbg, minimum width=3.2cm, minimum height=2.0cm},
  cellN/.style={rounded corners=2pt, draw=cred!45, fill=crbg, minimum width=3.2cm, minimum height=2.0cm}]
  \def\rA{0.8}\def\rB{-1.8}                      
  \node[hdr] at (3.4,2.25) {Fixed relative offset\\ \scriptsize $\delta=k$ (prev, skip-$k$)};
  \node[hdr] at (6.8,2.25) {Fixed absolute position\\ \scriptsize $t=0$ (attend-first)};
  \node[hdr] at (10.2,2.25) {Growing offset\\ \scriptsize $\delta_n=\lceil n/2\rceil$ (copy)};
  \node[rlbl] at (0.3,\rA) {Relative\\ $\kappa(i\!-\!j)$};
  \node[rlbl] at (0.3,\rB) {Clamped\\learned-abs.};
  \node[cellY] at (3.4,\rA){};\node[font=\Large,text=cgreen] at (3.4,{\rA+0.62}){\checkmark};
  \node[note] at (3.4,{\rA-0.42}) {peak $=$ target, all $L$ ($\alpha_L\!\to\!1$)};
  \node[cellN] at (6.8,\rA){};\node[font=\Large,text=cred] at (6.8,{\rA+0.62}){\ding{55}};
  \node[note] at (6.8,{\rA-0.42}) {needs offset $i$ (grows): no fit};
  \node[cellN] at (10.2,\rA){};\node[font=\Large,text=cred] at (10.2,{\rA+0.62}){\ding{55}};
  \node[note] at (10.2,{\rA-0.42}) {can't track $\lceil n/2\rceil$ (Thm~\ref{thm:growing})};
  \node[cellN] at (3.4,\rB){};\node[font=\Large,text=cred] at (3.4,{\rB+0.62}){\ding{55}};
  \node[note] at (3.4,{\rB-0.42}) {moving target: new keys share one feature};
  \node[cellY] at (6.8,\rB){};\node[font=\Large,text=cgreen] at (6.8,{\rB+0.62}){\checkmark};
  \node[note] at (6.8,{\rB-0.42}) {column-$0$ kernel (length-indep.)};
  \node[cellN] at (10.2,\rB){};\node[font=\Large,text=cred] at (10.2,{\rB+0.62}){\ding{55}};
  \node[note] at (10.2,{\rB-0.42}) {target drifts with $n$: no fit};
\end{tikzpicture}}
\caption{\textbf{The offset taxonomy, derived (Theorem~\ref{thm:growing}, Prop.~\ref{prop:duality}).}
A relative kernel length-generalizes a \emph{fixed relative offset}; a clamped learned-absolute encoding
does the reverse; and \emph{neither} handles a \emph{growing} offset (the shared blind spot, right
column). Throughout, the relative statements are for kernels with a strict unique global maximizer, the
hypothesis of Theorem~\ref{thm:growing}; the accompanying text explains why it is needed and what the
$L_0$-fit contributes. This $2\times3$ verdict is exactly the empirical PE taxonomy, here a consequence
of what a single positional head can realize.}
\label{fig:taxonomy}
\end{figure}

\begin{conjecture}[General length-generalization impossibility]
\label{conj:general}
For a deterministic PE defined at all lengths, if $R_L^{\mathrm{new}}\notin\Pi_{\Kn_L}$ and a
quantitative separation holds -- for every $W$ realizing $R_{L_0}$ with old-row margin $\ge\gamma$, the
residual $(I-\Pi_{\Kn_L})R_L^{\mathrm{new}}$ forces $\max_{j\le i,j\ne t}\varphi(i)^\top W\varphi(j)\ge
\varphi(i)^\top W\varphi(t_L(i))$ on some new row $i$ -- then no single head solves both $L_0$ and $L$.
\end{conjecture}
\noindent\emph{Remark (why the colspan argument is insufficient).} One might conclude impossibility from
``$\varphi(i)\notin\mathrm{colspan}(\Phimat_{L_0})$''. That does not suffice: the components of $W$ acting on
the orthogonal complement are \emph{unconstrained} by the $L_0$ fit and permit a rescuing $W$ as easily
as a failing one. The argmax flip is the substance of the statement and requires proof; we
leave it open.

\paragraph{Partial progress (App.~\ref{app:proofs}; the conjecture stays open).} We can say more.
\emph{(i) A saturation lemma.} For a finite-dimensional deterministic PE with $\Phimat_{L_0}$ of full
column rank $m$, the map $W\mapsto\Phimat_{L_0}W\Phimat_{L_0}^\top$ is injective
($\sigma_{\min}=\sigma_{\min}(\Phimat_{L_0})^2$), so a given kernel block determines $W$ and hence
$P_L$. \emph{(ii) But this does not decide LG}, because ``solving at $L_0$'' (S2) is an argmax+margin
\emph{cone} of $W$, not a single block: distinct blocks with identical $L_0$ argmax transfer to opposite
$L$-verdicts. LG is decidable only \emph{relative to a declared selection rule} (e.g.\ min-norm), not
from ``the $L_0$ fit'' alone -- the existence-vs-selection gap again. \emph{(iii) The obstruction is not
cone-emptiness}: an LP witness shows a single $W$ solving both $L_0$ and $L$ for prev-token sinusoidal in
the under-determined regime ($\mathrm{rank}=12\ll m=48$), so $C_{L_0}\cap C_L\ne\emptyset$; the failure
mechanism is selection (Prop.~\ref{prop:learnedabs}) rather than an empty cone. \emph{(iv) Sinusoidal escapes the
saturation route entirely}: at realistic frequency counts $\Phimat_{L_0}$ is rank-deficient or wildly
ill-conditioned ($\mathrm{cond}\sim10^{4}$--$10^{11}$), so $W$ is never robustly pinned. The sharpest
remaining open sub-case: a deterministic PE with a non-empty joint cone on a non-realizable
\emph{growing}-offset target and selection failure.

\section{Empirical confirmation of the theory}
\label{sec:exp}
We test whether trained transformers obey the realizability structure the theory predicts. These
experiments are meant to confirm the theorems rather than to benchmark a predictor: $\alphaL$ is the
theory's computable witness, and the relative/absolute taxonomy it should track is precisely the empirical
pattern the theory derives (Prop.~\ref{prop:duality}).

\begin{center}
\fcolorbox{cgray}{cgbg}{\parbox{0.95\linewidth}{\small
\textbf{What is certified vs.\ what is measured.} The theory certifies a \emph{fitted positional kernel}:
$\alphaL$ is a property of $(\Phimat,t_n)$, and Theorem~\ref{thm:suff} guarantees that a kernel that
realizes the task at $L_0$ with positive margin and reaches $\alphaL{=}1$ with positive margin on
the new rows still solves the task at length $L$; both premises are required. The \emph{trained Transformer} is a
\emph{separate} object, a non-convex optimizer over both positional and content channels. These
experiments test the empirical bridge: whether certified positional realizability ($\alphaL\!\to\!1$)
\emph{co-occurs} with the trained model's length generalization. A gap between them is therefore expected
and informative. It is exactly the selection/content/depth boundary of \S\ref{sec:limits}, and does not
contradict the theory.}}
\end{center}

\paragraph{Protocol.} Tiny decoder-only transformers (vocab $12$, $d=64$) trained at variable length up
to $L_0=20$ and evaluated on extrapolation rows $[L_0,L)$. Three depth tiers: the \emph{1-layer 1-head} setting the theory describes
(1-layer, 1-head) and two \emph{scale/escape} tiers (2-layer 4-head, 4-layer 4-head). Tasks span
bounded-offset (prev, skip-$k$), growing-offset (copy, reverse), and absolute (first) targets, across
five PEs (learned-abs, NoPE, ALiBi, sinusoidal, RoPE); $2625$ measurements
($3\times5\times7\times5\times5$) over $5$ seeds and $5$ test lengths. Where we report correlations on a
learnable subset ($\mathrm{train\ acc}>0.8$), it is because LG of a task a model never learned at $L_0$
is undefined; we always give the unconditioned numbers too. \textbf{The tasks are position-dependent
(small vocabulary, repeated tokens) so that content cannot localize the target position.} This
statistically discourages content shortcuts but does not make the trained model's content logit
literally argmax-neutral as (S3) assumes (the value/content path remains). So these experiments confirm
the positional theory \emph{within its assumed scope}; they do not confirm it independently of (S3).
Where content genuinely carries the target the impossibility results are vacuous, which is exactly the
boundary we map in \S\ref{sec:limits} and App.~\ref{app:content}.

\paragraph{The evaluation grid.} The taxonomy these experiments test is \emph{derived}
(Prop.~\ref{prop:duality}), so the axes follow the theory's own categories.
Table~\ref{tab:design} lists every condition we evaluate together with the reason for each level. The
task set covers all three offset regimes (fixed-relative, fixed-absolute, growing), and the PE set is
the five canonical classes rather than a subset.
\begin{table}[H]
\centering\small
\caption{The evaluation grid. Each axis follows the theory's categories or standard practice; the
reason given for each level is the reason it was included.}
\label{tab:design}
\begin{tabular}{@{}lll@{}}
\toprule
\textbf{Axis} & \textbf{Levels} & \textbf{Rationale} \\
\midrule
PEs    & learned-abs, NoPE, ALiBi, sinusoidal, RoPE & the five canonical PE classes \\
Tasks  & prev, skip-2/3/5, copy, reverse, first & fixed-relative, growing-offset, fixed-absolute regimes \\
       & ($+$ pos-2/5, App.~\ref{app:interior}) & interior absolute positions, off the causal boundary \\
Lengths& $30,40,50,60,80$ & extrapolation beyond $L_0{=}20$ ($1.5\times$--$4\times$) \\
Seeds  & $0,1,2,3,4$ & variance / bootstrap CIs \\
Models & 1L1H, 2L4H, 4L4H & 1-layer 1-head setting $+$ depth/multi-head escape routes \\
\bottomrule
\end{tabular}
\end{table}

\paragraph{Primary confirmation: $\alphaL$ separates generalizing from non-generalizing cells.} Over
\textbf{all $35$ (PE, task) cells} (no learnability filter), cell-level $\alphaL$ separates the $6$ that
length-generalize (LG~$\ge0.5$) from the $29$ that do not with \textbf{AUC $=1.00$} (base rate $0.17$;
permutation $p{=}5\times10^{-5}$, $20$k shuffles). This is not a fragile $1.0$: the margin is wide
(min $\alphaL$ among positives $=1.00$ vs.\ max among negatives $=0.36$), and restricting to the $11$
non-trivial cells (dropping negatives with $\alphaL{\approx}0$) keeps AUC $=1.0$. The screen is thus
validated \emph{on the hard cells too} (all $20$ NoPE/ALiBi/copy/reverse cells sit at $\alphaL{\approx}0$,
std $0.02$, and none generalize in this single-head regime). As a \emph{graded} (rank) statistic over the
same $35$ cells the unconditioned Spearman is modest ($\rho{=}0.47$), because the signal is dominantly
the clean binary separation; we report it for transparency and turn to the learnable subset for the
graded view.

\paragraph{$\alphaL$ acts primarily as a binary realizability indicator.} On the learnable subset
($\mathrm{train\ acc}>0.8$, which retains $15$ of $35$ cells; \textbf{all NoPE, ALiBi, and copy/reverse
are excluded}, so this subset can only probe the bounded/absolute regime), a naive cell-level
$\rho=0.90$ (bootstrap $95\%$ CI $[0.73,0.95]$, $15$ cells, $5$ seeds). But this is almost
entirely the binary $0$-vs-$1$ separation re-expressed (all numbers cell-level): binarizing $\alphaL$ at
$0.5$ already gives $\rho=0.85$, and restricting to non-trivial cells ($\alphaL>0.05$) gives $\rho=0.77$.
On the only family with genuinely interior $\alphaL$ values (sinusoidal bounded-offset) there is
\textbf{no positive graded signal}: $\rho=-0.20$ at the cell level ($n{=}4$ cells) and $\approx0$ at finer
aggregations ($0.04$ row-level, $0.09$ at $(\text{PE,task},L)$). We therefore do not claim $\alphaL$ is a
graded predictor; it is a \emph{binary realizability indicator}, which is what the theory predicts. (The unconditioned cell-level Spearman over all $35$ cells is $0.47$.)

\paragraph{Sufficiency is approximate, with characterized violations.} The
``$\ge92\%$-precision sufficient screen'' claim must be qualified. The precision of
$\alphaL\!\ge\!\tau \Rightarrow$ LG~$\ge0.7$ rises with $\tau$: $0.925$ at $\tau{=}0.5$, $0.95$ at
$\tau{=}0.7$, $0.987$ at $\tau{=}0.9$. At $\tau{=}0.5$ there are $12$ violating rows out of $160$, and they
come from three families: \texttt{rope-first} at $L{=}30$ ($\alphaL{=}0.80$, LG~$=0.13$, all $5$ seeds),
\texttt{sinusoidal-prev} at $L{=}40$ ($\alphaL{=}0.50$, LG~$=0.14$, all $5$ seeds), and
\texttt{sinusoidal-first} at $L{=}60,80$ ($\alphaL{=}1.0$, one seed each, the worst being LG~$=0.49$
against a $5$-seed cell mean of $0.77$). These are the
\emph{existence-vs-selection} gap of Prop.~\ref{prop:sin} surfacing in the \emph{validated} predictor:
the CE fit over-sharpens $\alphaL$ at small $L$ where the true selected kernel decays. We therefore do
not claim a clean row-level sufficient indicator; we claim a strong \emph{binary screen}
(AUC $1.0$) with a positive cell-level association on the learnable subset, with violations confined to
the relative/sinusoidal selection gap.

\paragraph{Relation to the received taxonomy.} A PE-\emph{blind} rule (``is the offset bounded?'')
attains \emph{negative} correlation ($\rho=-0.24$), so PE-awareness is essential. The hand-built taxonomy
baseline (relative-PE$\times$bounded $\lor$ absolute-PE$\times$absolute-task) reaches $\rho=0.85$, within
our bootstrap CI: $\alphaL$ \emph{matches but does not beat} it. \textbf{In the theory framing this is
the expected outcome rather than a shortfall:} that taxonomy is exactly the coarse rule
Proposition~\ref{prop:duality} \emph{derives}, so $\alphaL$ tracking it confirms the theory.
The role of $\alphaL$ is to be the directly computed witness of why
the taxonomy holds and \emph{where} it breaks (the boundary where token content identifies the target below); it is not offered as
a way to beat a baseline. We attempted to find
a PE the taxonomy cannot classify but $\alphaL$ can (a low-frequency RoPE, still ``relative''): the
attempt instead exposed an $\alphaL$ \emph{false negative}: on the content-neutral skip-$k$ task the
trained model generalizes a target the linear-kernel fit deems unrealizable. Since the task is
content-neutral this is not a content rescue in the sense of App.~\ref{app:content}; we have not isolated
the mechanism further and report it as an unexplained disagreement between the trained model and the fitted kernel, and as a negative
result for the comparison with the taxonomy.

\paragraph{What the theorem certifies vs.\ what we measure.} Theorem~\ref{thm:suff} (any
realizing $W$) certifies the \emph{fitted kernel} $W$ at which we compute $\alphaL$. It does not certify
the \emph{trained transformer}, which is a different parameter. So $\alpha_L(W_{\mathrm{fit}}){=}1$ guarantees
the fitted kernel solves length $L$; whether the trained model does is the empirical question, and the
two can diverge: \texttt{rope-first} ($\alphaL{=}0.80$, LG~$0.13$ at $L{=}30$) and
\texttt{sinusoidal-first} ($\alphaL{=}1.0$ at $L{=}80$, LG~$0.77$ on average and as low as $0.49$ on the
worst seed) are cases where $W_{\mathrm{trained}}\ne W_{\mathrm{fit}}$. We
therefore do not claim ``the certified object is the validated one.'' The right test of the
theorem's relevance is co-occurrence of $\alpha_L(W_{\mathrm{fit}}){=}1$ with trained-model LG, which
holds for $\ge92\%$ of high-$\alphaL$ rows (rising to $98\%$ at $\alphaL{\ge}0.9$); this is strong but not
exact. Theorem~\ref{thm:suff} covers any realizing kernel, so the cross-entropy fit we use is as certified as
the min-norm interpolant; the two simply differ in how well they predict the trained model, at
$\rho{=}0.91$ against $\rho{=}0.56$ (App.~\ref{app:add}). We therefore frame the empirical link as a
co-occurrence rate rather than a theorem.

\begin{table}[t]
\centering
\caption{Training-free $\alphaL$ (at $L{=}60$) vs.\ empirical LG accuracy (mean over $5$ seeds),
1-layer 1-head and scale (2L4H) tiers. Top: $\alphaL$ tracks 1L1H LG. Bottom: at scale, the
\emph{content/recency channel} (outside the positional-only abstraction, S3) lets ALiBi/NoPE generalize
where $\alphaL{=}0$, making the predictor's one-sidedness visible.}
\label{tab:main}
\begin{tabular}{llccc}
\toprule
PE & task & $\alphaL$ & LG (1L1H) & LG (2L4H) \\
\midrule
RoPE        & skip5  & 1.00 & $1.00{\pm}0.00$ & $0.97{\pm}0.02$ \\
RoPE        & prev   & 1.00 & $0.87{\pm}0.05$ & $0.85{\pm}0.13$ \\
RoPE        & copy   & 0.00 & $0.10{\pm}0.01$ & $0.14{\pm}0.01$ \\
sinusoidal  & first  & 1.00 & $0.87{\pm}0.13$ & $0.92{\pm}0.04$ \\
sinusoidal  & skip3  & 0.25 & $0.22{\pm}0.03$ & $0.11{\pm}0.01$ \\
learned-abs & first  & 1.00 & $1.00{\pm}0.00$ & $1.00{\pm}0.00$ \\
learned-abs & prev   & 0.00 & $0.08{\pm}0.00$ & $0.08{\pm}0.00$ \\
\midrule
\multicolumn{5}{l}{\emph{Content-channel cases (S3 violations): $\alphaL$ under-predicts at scale}}\\
ALiBi & prev  & 0.00 & $0.18{\pm}0.00$ & $\mathbf{1.00{\pm}0.00}$ \\
ALiBi & first & 0.00 & $0.15{\pm}0.01$ & $\mathbf{1.00{\pm}0.00}$ \\
NoPE  & first & 0.00 & $0.16{\pm}0.01$ & $\mathbf{1.00{\pm}0.00}$ \\
\bottomrule
\end{tabular}
\end{table}

\begin{figure}[t]
\centering
\includegraphics[width=\textwidth]{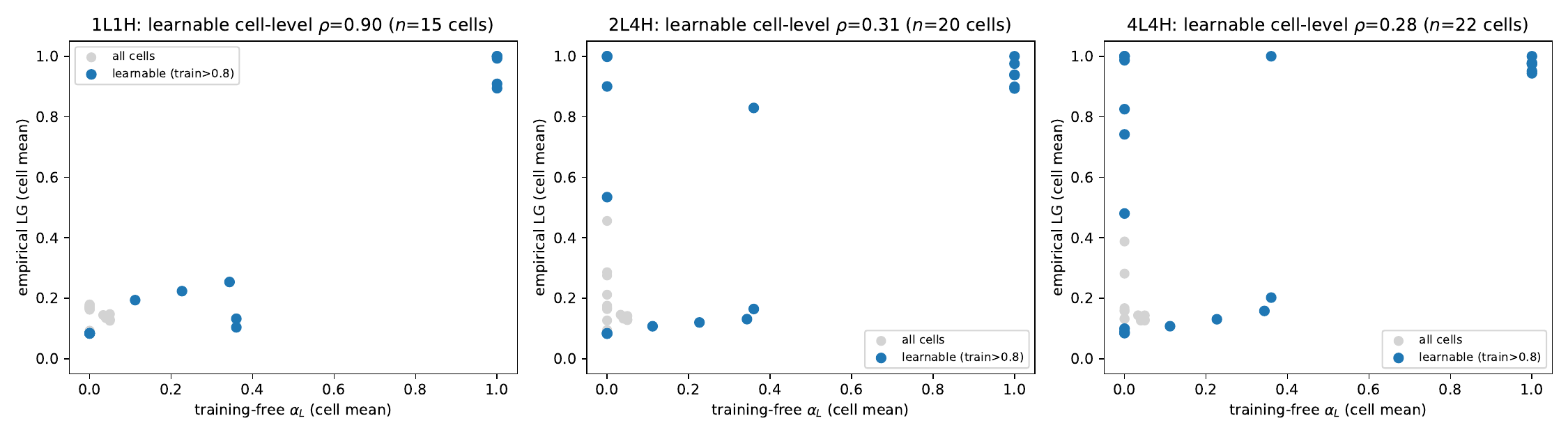}
\caption{Training-free $\alphaL$ vs.\ empirical LG across the three tiers ($5$ seeds). \emph{Left}
(1L1H, the setting the theory describes): learnable cells (blue) give Spearman $\rho{=}0.90$. \emph{Center, right}
(scale, 2L4H, 4L4H): the \emph{graded} rank correlation weakens (its bootstrap CI includes $0$) as the
content channel rescues low-$\alphaL$ cells, yet high $\alphaL$ remains a \emph{sufficient screen} at
every depth (precision $\ge0.92$).}
\label{fig:scatter}
\end{figure}

\begin{figure}[t]
\centering
\includegraphics[width=0.72\textwidth]{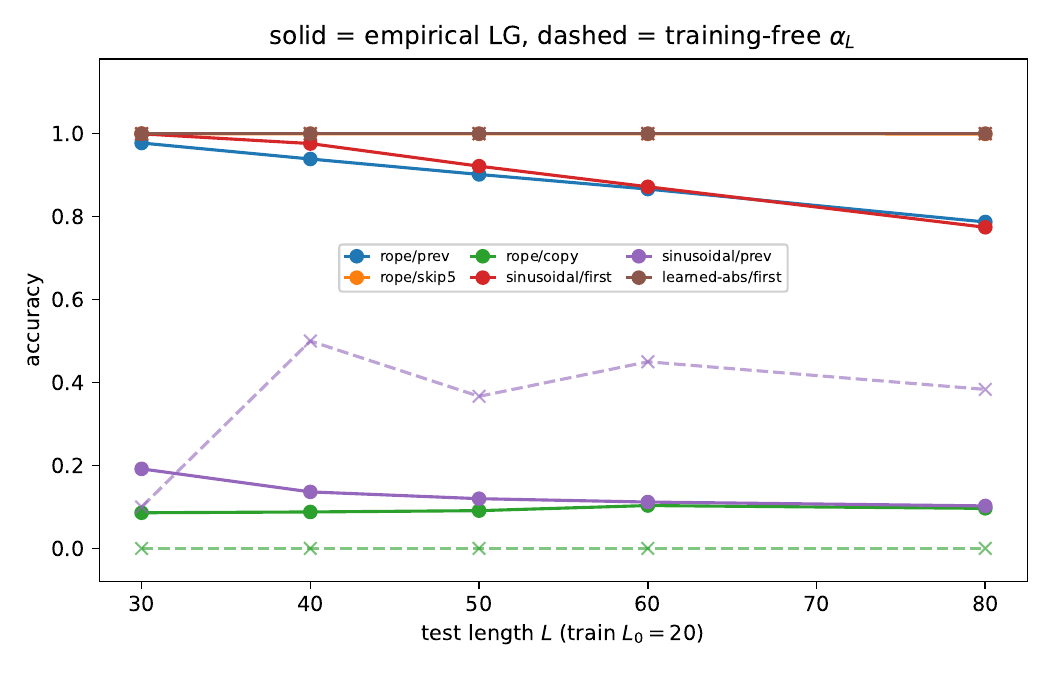}
\caption{Empirical LG (solid) and training-free $\alphaL$ (dashed) vs.\ test length $L$ (train
$L_0{=}20$), 1L1H. $\alphaL$ tracks both who generalizes and how LG decays with $L$.}
\label{fig:sweep}
\end{figure}

\paragraph{Scale and depth (H3): one-sidedness.}
At $2$- and $4$-layer/$4$-head, the graded rank correlation on learnable cells falls from $0.90$ to
$\rho=0.31$ and $0.28$, with bootstrap CIs that \emph{include $0$} (Table~\ref{tab:depth}), so beyond
depth $1$ $\alphaL$ is no longer a significant graded predictor. What \emph{does} survive at depth is the
\emph{sufficient}-screen property: precision rises to $0.94$ and $0.97$ (CIs $[0.89,0.97]$, $[0.94,0.99]$),
i.e.\ high $\alphaL$ still implies LG (Table~\ref{tab:depth}, Fig.~\ref{fig:depth}). The lost graded
signal is entirely attributable to
low-$\alphaL$ cells that generalize via mechanisms outside the positional-only abstraction: ALiBi and
NoPE realize prev-token and \texttt{first} through the content/recency channel ($\alphaL{=}0$, LG~$=1.0$;
Table~\ref{tab:main}, bottom), and the set of such cells \emph{grows} with depth. This is the predicted
(S3) boundary: $\alphaL$ is a \emph{sufficient} indicator rather than a necessary one, mirroring exactly that
sufficiency (Theorem~\ref{thm:suff}) and the relative growing-offset necessity
(Theorem~\ref{thm:growing}) are proven while the \emph{fully general} necessity
(Conjecture~\ref{conj:general}) is open. The heatmaps in Fig.~\ref{fig:heat} localize where content
rescues position.

\begin{figure}[t]
\centering
\includegraphics[width=\textwidth]{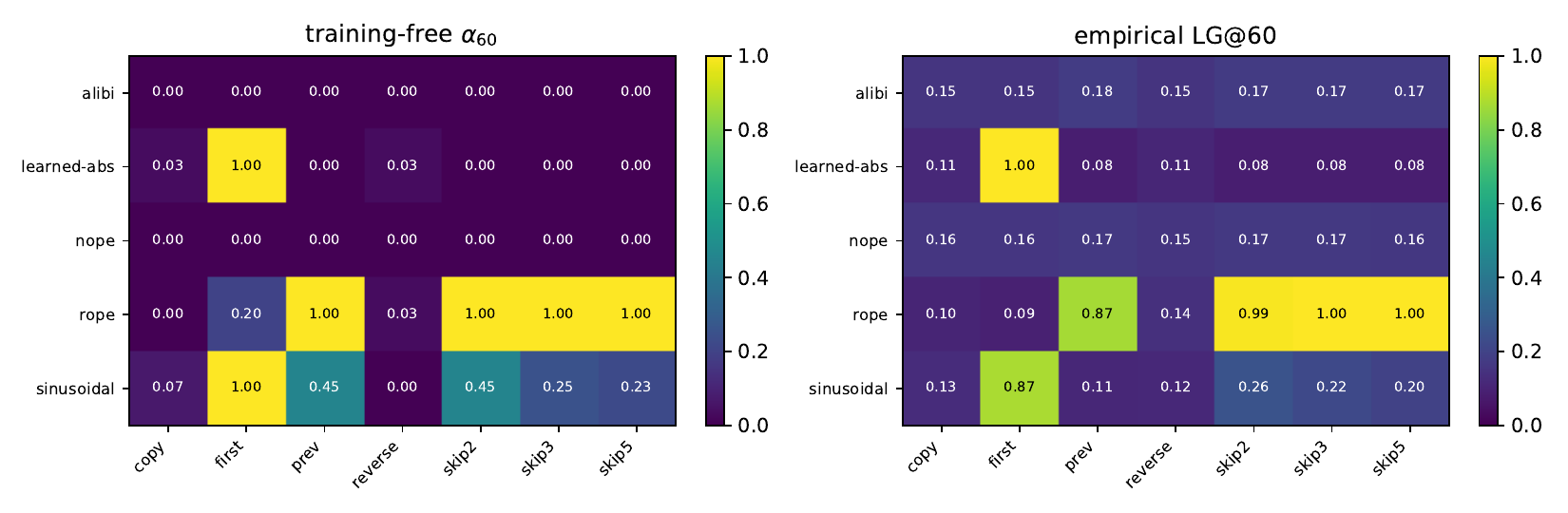}
\caption{Training-free $\alpha_{60}$ (left) and empirical LG@60 (right) over (PE, task), 1L1H. Agreement
is strong in this single-head regime; the content-driven disagreements (which emerge at depth, not shown
here) are characterized in \S\ref{sec:limits}.}
\label{fig:heat}
\end{figure}

\section{Limitations}
\label{sec:limits}

\noindent App.~\ref{app:failuremodes} sets out, with evidence, the five ways $\alphaL$ departs from a trained
model's behaviour: a selection gap, content rescue, a depth escape, low-frequency under-prediction, and
overprediction when the trained model selects a different kernel. The first four are failures of \emph{necessity} and leave the sufficiency direction intact, which is
why we call $\alphaL$ a one-sided screen; the fifth is the one case where high $\alphaL$ does not
co-occur with length generalization.

\medskip\noindent The remaining scope limits:
(1)~\textbf{Single head / single layer.} Multi-head and depth provably escape the impossibility (an
anchor head plus a content-match head solves copy at depth~1); all impossibility claims are scoped to
the positional-only subclass. (2)~\textbf{Content channel.} Where content carries the target, $\alphaL$
under-predicts; this is the binding limitation and the source of the indicator's one-sidedness. We show
it is closable (App.~\ref{app:content}): on a content-driven induction task, positional $\alphaL{=}0$ for
every PE yet a $2$-layer ALiBi model reaches LG~$0.84$, and a content-aware $\alpha_L^{+}$ flags the
regime $\alphaL$ misses (with residual PE-modulation left to future work).
(3)~\textbf{The forward question is open.} Theorem~\ref{thm:suff} (stated for any realizing $W$)
certifies that $\alpha_L(W){=}1\Rightarrow$ LG, but it does not say which $(\varphi,t)$ admit a
fitted $W$ reaching $\alpha_L{=}1$; we characterize the negative side (Theorems~\ref{thm:growing},
Prop.~\ref{prop:sin}) but a positive forward criterion is open.

\noindent A list of statements we explicitly do not claim, with the reason for each, is given in App.~\ref{app:notclaim}.

\section{Conclusion}
We gave a realizability theory of length generalization for single-head positional attention: a
simulation transfer operator and its scalar witness $\alphaL$, a sufficiency theorem, and a
\textbf{deep-row characterization for relative heads} (for kernels with a strict unique global maximizer,
$\alpha_L\to1$ exactly when the offset is constant
on a density-one set of deep rows and that offset is attainable within the encoding's kernel family) that
subsumes growing-offset impossibility and the absolute/relative duality
\emph{deriving} the empirical pattern, alongside a learned-absolute characterization; only the
general deterministic-PE case remains conjectural. Selected theoretical claims are accompanied by executable checks that probe their assumptions. Experiments confirm the structure:
in the controlled single-head regime $\alphaL$ \emph{exactly separates} the taxonomy's generalizing
and non-generalizing cells (AUC $=1.0$). They also locate the theory's boundary: the
content channel, outside the abstraction, can carry the target where positional realizability cannot
(ALiBi prev, induction), and $\alphaL$ under-predicts low-frequency relative PEs. Rather than replacing
the received taxonomy, $\alphaL$ \emph{derives} it from positional-kernel transfer and exposes its escape
routes. Extending the theory across the content channel and to multi-head/depth, which provably escape
the single-head impossibilities, is the natural next step.

\subsubsection*{Acknowledgments}
We thank the Action Editor, Zhanxing Zhu, and the three anonymous reviewers. Their careful reading
identified errors in earlier versions of Theorem~\ref{thm:growing} and Propositions~\ref{prop:learnedabs}
and~\ref{prop:duality}, and their comments substantially improved both the results and the presentation.

\subsubsection*{Use of language models}
Large language models (Claude and Codex) were used extensively in preparing this work: to draft and
edit the text, to write the experiment, analysis and verification code, and to develop and check the
mathematical arguments, including searches for counterexamples to the stated results. The authors take
full responsibility for the content. Every reported number can be reproduced from the released code and
data.

\bibliographystyle{tmlr}
\bibliography{refs}

\appendix
\section{Full proofs}
\label{app:proofs}

\subsection{Theorem~\ref{thm:suff} (Simulation sufficiency, any realizing $W$)}
Fix any $W$ satisfying (i)--(ii). Under (S1)--(S3) the row-$i$ logits at any length $n$ are
$a^{(n)}_{ij}=\varphi(i)^\top W\varphi(j)$ for $j\le i$, with the content term argmax-neutral.
\emph{New rows} $i\in[L_0,L)$: by (ii) the simulated $P_L[i,\cdot]=a^{(L)}_{i\cdot}$ has its unique masked
maximum at $t_L(i)$ with gap $\ge\gamma_L$. \emph{Old rows} $i<L_0$: the causal mask restricts the key
set to $\{0,\dots,i\}\subseteq\{0,\dots,L_0-1\}$, and for those $j$ the features $\varphi(j)$ are exactly
those present at length $L_0$; hence $a^{(L)}_{ij}=a^{(L_0)}_{ij}$ entrywise, so the masked argmax and gap
$\ge\gamma_0$ are inherited verbatim from (i). Every row's unique masked maximizer is $t_L(i)$ with gap
$\ge\min(\gamma_0,\gamma_L)>0$. \hfill$\square$

\emph{Remark (why not the min-norm interpolant).} Stating this for the min-norm $W^*$ would invoke
$\Phimat_{L_0}W^*\Phimat_{L_0}^\top=R_{L_0}$, which is false unless $R_{L_0}\in\Kn_{L_0}$. The argument
needs no such exactness and no choice of $W$: only that the chosen $W$ is argmax-correct at $L_0$ and has
$\alpha_L(W){=}1$. Hence it covers the fitted $W$ used in the experiments, though
not the trained transformer. The theorem's force comes entirely from the invariance of the already-seen
rows under the causal mask; the new-row clause restates (ii).

\subsection{Proposition~\ref{prop:static} (Affine features force endpoint argmax)}
With $\varphi(i)=[1,i]^\top$ write $W=\begin{psmallmatrix}W_{00}&W_{01}\\W_{10}&W_{11}\end{psmallmatrix}$.
Then $a_{ij}=\varphi(i)^\top W\varphi(j)=c_0(i)+c_1(i)\,j$ with $c_0(i)=W_{00}+iW_{10}$,
$c_1(i)=W_{01}+iW_{11}$, which is affine in the key index $j\in\{0,\dots,i\}$. An affine function on a
finite integer chain is monotone, so its maximizers are endpoints, and the maximizer is unique unless
$c_1(i)=0$ (a flat row, margin $0$, excluded by (S2)). For a strict interior target $0<t<i$,
optimality would require $a_{it}\ge a_{i0}$ and $a_{it}\ge a_{ii}$, i.e.\ $t\,c_1(i)\ge0$ and
$(t-i)c_1(i)\ge0$; with $t>0,\ t-i<0$ this forces $c_1(i)\ge0$ and $c_1(i)\le0$, hence $c_1(i)=0$ and a
zero margin. Therefore no interior key is ever the \emph{unique, positive-margin} maximizer. Since the
competitor set $\{0,\dots,i\}$ does not depend on $n$, the conclusion holds at every length; prev-token
($t=i-1$, interior for $i\ge2$) is thus unrealizable for all $n\ge3$. \hfill$\square$

\emph{Caveat.} The statement is about the unique positive-margin argmax, not the set inclusion
$\arg\max\subseteq\{0,i\}$, which fails on the measure-zero locus $c_1(i)=0$ (there the argmax set is
the whole chain).

\subsection{Proposition~\ref{prop:learnedabs} (Learned-absolute underdetermination)}
Let $\varphi(i)=v_i$ with $v_0,\dots,v_{L_0-1}$ fixed and $v_{L_0},\dots,v_{L-1}\sim\mu$ i.i.d.\ from an
absolutely continuous isotropic law, and $W=S(v_0,\dots,v_{L_0-1})$ any rule of the trained rows only
(so $W\perp\{v_j\}_{j\ge L_0}$). Put $g(j)=W\varphi(j)$. \emph{Uniqueness:} for two keys $j\ne j'$ with $j,j'\ne i$ the difference
$a_{ij}-a_{ij'}=v_i^\top W(v_j-v_{j'})$ is affine in the absolutely continuous $v_i$, and it is a nonzero
functional precisely when $W(v_j-v_{j'})\neq0$; a comparison against the self-key contributes
$v_i^\top Wv_j-v_i^\top Wv_i$, which is a polynomial of degree two in $v_i$ rather than an affine
functional. For pairs of distinct non-self keys the coefficient $W(v_j-v_{j'})$ is nonzero whenever $W$ does not
collapse them, and the zero set of a nonzero affine functional is null. Comparisons involving the
self-key are not covered by this argument, and no condition of that kind suffices: take $W$
antisymmetric with $Wv_0=0$. Then $v_i^\top Wv_i=0$ because $W$ is antisymmetric, and $v_i^\top Wv_0=0$
because $Wv_0=0$, so the self-key and the trained key $0$ carry \emph{identical} logits for every $v_i$,
while $Wv_j\neq Wv_i$ still holds generically. With positive probability the remaining logits are all
smaller and this identical pair is the maximum. We therefore take almost-sure uniqueness of $M_i$ as an
explicit hypothesis of the proposition rather than deriving it; it holds, for instance, whenever the
realized logit vector has distinct entries almost surely, which we verify numerically for the fitted
kernels we use. \emph{Target-independence:} the map $v_i\mapsto\arg\max_j v_i^\top g(j)$ is a
fixed measurable function of $(v_i,\{g(j)\})$ in which the label $t_L(i)$ never appears, and $S$ cannot
encode it into the new-key geometry; thus $\mathrm{Law}(M_i)$ does not reference $t_L(i)$.
\emph{Chance rate:} conditioned on $W$, the off-diagonal new keys $\{g(j):L_0\le j<i\}=\{Wv_j\}$ are
i.i.d.\ and independent of $v_i$, so by exchangeability their indices are equally likely to carry the
maximum given that the maximum is attained in $N_i$. With $t_L(i)\in N_i$ and
$k=|\{L_0\le j\le i\}|$ this gives $\Pr[M_i=t_L(i)\mid M_i\in N_i]=1/(k-1)$, and since
$\{M_i=t_L(i)\}\subseteq\{M_i\in N_i\}$, the chain rule gives
$\Pr[M_i=t_L(i)]=\Pr[M_i\in N_i]/(k-1)\le1/(k-1)$. The old keys and the self-key are not exchangeable
with $N_i$, so they may hold the maximum; that is why the unconditional value is the smaller one.
\hfill$\square$

\emph{Underdetermination, not cone-emptiness.} A rescuing $W$ realizing the target on all new rows
exists with positive probability (Monte Carlo: $\approx20$--$35\%$ of draws;
\texttt{experiments/\allowbreak theorem2b\_\allowbreak learned\_\allowbreak abs\_\allowbreak montecarlo.py}); the selection rule is simply blind to the
untrained coordinates. The self-key $j=i$ is excluded because $a_{ii}=v_i^\top W v_i$ is a
\emph{quadratic} (non-exchangeable) form with mean $\mathrm{tr}\,W$, which is why the conditional rate is
$1/(k-1)$ rather than $1/k$.

\subsection{Proposition~\ref{prop:sin} (Sinusoidal: existence vs.\ selection)}
\emph{(A) Existence.} With incommensurate geometric frequencies $\{\theta_k\}$, the translation-invariant
kernel $g(d)=\sum_k\cos(\theta_k(d-\delta))\le\#\{\theta_k\}$ with equality iff
$\theta_k(d-\delta)\in2\pi\mathbb{Z}$ for all $k$; incommensurability forces $d=\delta$, so $g$ has a
unique global maximum at the target offset for any fixed $\delta$ (verified numerically to
$\delta\approx5\times10^4$, far beyond the lowest-frequency period, refuting the intuition that a
sinusoidal encoding must fail once the offset exceeds that period). Hence a $W$ realizing a fixed offset exists at all lengths. \emph{(B) Growing
offsets.} A translation-invariant kernel \emph{with a strict unique global maximizer} $c(\kappa)$ realizes
the constant offset $c(\kappa)$ on every deep row (Theorem~\ref{thm:growing}), which can equal
$\delta_L=\lceil L/2\rceil$ only at $L\in\{2c(\kappa)-1,2c(\kappa)\}$, and at some $L\le c(\kappa)$ where
the window clips onto a lower local peak; for every $L>2c(\kappa)$ each scored row is
missed, so $\alphaL\to0$ on copy/reverse. (The earlier ``cannot hold at two lengths'' phrasing was
imprecise: shallow clipping does let one kernel solve copy at two \emph{particular} lengths, e.g.\
$\kappa[10]{=}5,\kappa[30]{=}8$ at $L_0{=}20$ and $L{=}60$; the correct statement is asymptotic. Without
a global maximizer the conclusion fails outright, as the monotone example after the proposition shows.) \emph{Existence $\ne$ selection:} the $L_0$-fit $W^*$ does not recover the
fixed-offset rescue -- under the transfer operator sinusoidal prev decays ($\alphaL=0.45$ at $L{=}60$,
$0.34$ at $L{=}128$), and the sub-unit value \emph{is} $\alphaL$ quantifying the selection gap.
\hfill$\square$

\subsection{Theorem~\ref{thm:growing} (Relative-head deep-row characterization)}
Let $a_{ij}=\kappa(i-j)$ on causal keys $j\le i$, with a strict unique \emph{global} maximizer
$c(\kappa)=\arg\max_{d\ge0}\kappa(d)$, under a fixed tie rule. Write $\mathcal{S}_L$ for the scored rows
(the task's query positions in $[L_0,L)$), which is the set $\alpha_L$ averages over.
\emph{Deep-row constancy (Lemma).} For any row $i\ge c(\kappa)$ the window $[0,i]$ contains $c(\kappa)$,
and no admitted offset beats it because $c(\kappa)$ is the maximum over all $d\ge0$; so the masked argmax
offset is the constant $c(\kappa)$ for all such $i$ and all $n$ (verified over $4000$ random kernels, both
tie rules; globality is what makes this work for the unbounded-support RoPE and ALiBi families).

By the Lemma, $\kappa$ hits a deep scored row $i$ iff $\delta_L(i)=c(\kappa)$, so with
$\mathcal{H}(\kappa)=\mathcal{S}_L\cap[L_0,c(\kappa))$, whose size is at most $c(\kappa)-L_0$ uniformly in
$L$ (it is empty once the query region starts above $c(\kappa)$, as happens for copy and reverse),
\begin{equation}\label{eq:alpha-density}
\alpha_L \;=\; \frac{\#\{i\in\mathcal{S}_L,\ i\ge c(\kappa):\ \delta_L(i)=c(\kappa)\}+r(\kappa)}{\#\mathcal{S}_L},
\qquad 0\le r(\kappa)\le|\mathcal{H}(\kappa)|,
\end{equation}
so the clipped rows contribute $O(1/\#\mathcal{S}_L)\to0$. Note that both the theorem and
\eqref{eq:alpha-density} normalize by $\#\mathcal{S}_L$; this is the same convention as the
implementation, and it matters, since for tasks whose query region is a strict subset of $[L_0,L)$ (copy,
reverse) the two would differ by a non-vanishing factor.

\emph{($\Leftarrow$)} Assume \eqref{eq:thm2cond}: the required offset equals $\delta^*$ off a
density-zero subset of $\mathcal{S}_L$, and some $\kappa\in\mathcal{K}_0$ has $c(\kappa)=\delta^*$. Take
that $\kappa$. Old rows are correct because $\kappa$ realizes the task at $L_0$, and the causal-mask
identity of Thm~\ref{thm:suff} preserves that correctness at length $L$; by
\eqref{eq:alpha-density} the missed deep rows form a vanishing fraction and $|\mathcal{H}(\kappa)|$ is
fixed, so $\alpha_L\to1$. (Verified for $\delta^*\in\{1,2,3,5,8\}$, $L$ up to $5000$.) Note both
hypotheses are needed: for the canonical ALiBi family $\kappa(d)=c_0+sd$, $s\le0$, the maximizer is
$c(\kappa)=0$ for every admissible slope, so no ALiBi kernel realizes $\delta^*\ge1$ however constant the
demanded offset is.

\emph{The exact conclusion $\alpha_L=1$ needs every scored row, not just the deep ones.} The clipped set
$\mathcal{H}(\kappa)$ never leaves $\mathcal{S}_L$, so if a row in it is missed, $\alpha_L<1$ at every
$L$. Example: $\kappa(d)=-|d-30|$ with $L_0=20$ leaves the ten rows $[20,30)$ clipped; a task demanding
offset $1$ there and $30$ on deep rows satisfies exact constancy on all deep rows, yet
$\alpha_L=1-10/(L-20)$, i.e.\ $0.500,0.875,0.990,0.99995$ at $L=40,100,10^3,2\cdot10^5$: it tends to
$1$ but never equals it. Hence $\alpha_L=1$ is equivalent to the realized offset hitting the target on
every scored row, deep and clipped alike.

\emph{($\Rightarrow$)} Suppose $\alpha_L\to1$ for some $\kappa\in\mathcal{K}_0$. Put $\delta^*=c(\kappa)$,
which is attainable in $\mathcal{K}_0$ by construction, giving \eqref{eq:thm2cond}(ii). By
\eqref{eq:alpha-density} the fraction of scored rows with $\delta_L(i)\neq\delta^*$ equals
$1-\alpha_L+o(1)\to0$, giving \eqref{eq:thm2cond}(i). For the corollaries, no $\delta^*$ can satisfy
\eqref{eq:thm2cond}(i) in either of two cases. \textbf{(a) The offset grows with $n$}
(copy, $\delta_n=\lceil n/2\rceil$): for all $L>2c(\kappa)$ every copy row is deep with realized offset
$c(\kappa)$, which cannot equal $\lceil L/2\rceil\to\infty$, so the failing fraction tends to $1$ and
$\alpha_L\to0$. \textbf{(b) It varies within a length} (reverse, $\delta_n(i)=2i-2\lceil n/2\rceil+1$,
distinct offsets $\{1,3,5,\dots\}$): a nested-window monotonicity argument gives a contradiction already
at $L_0=20$. Row $18$ demands offset $17$ and its window is $\{0,\dots,18\}$, which contains offset $15$,
so $\kappa[17]>\kappa[15]$. Row $17$ demands offset $15$ and its window $\{0,\dots,17\}$ contains offset
$17$, so $\kappa[15]>\kappa[17]$. The two are incompatible, independent of $c(\kappa)$.
A $20$k-kernel sweep confirms no $\kappa$ solves reverse at $L_0$ (\texttt{iff\_constant\_offset\_lg.py}). \hfill$\square$

\emph{Why density and not exact constancy.} See App.~\ref{app:thm2remarks}(a) for the counterexample that forces condition~\eqref{eq:thm2cond}(i) to be stated in density.

\emph{Why deep-row/asymptotic, not a finite-$L$ ``complete iff''.} One might hope to add the necessity
clause ``shallow rows $=\min(i,\delta^*)$'', but it is \textbf{false}: $\kappa=[1.46,1.96,1.80,6.32]$ (offsets $0$--$3$, $c(\kappa)=3$) makes deep rows take
offset $3$ but shallow row $i{=}2$ argmax offset $1\ne\min(2,3){=}2$ (window values $[1.46,1.96,1.80]$).
Shallow rows are a finite, $\kappa$-dependent set, uncharacterized by a universal formula but immaterial
to $\alpha_L$ as $L\to\infty$ -- hence the criterion is on deep rows. Two further cautions: the
growing-offset clash is \emph{asymptotic} rather than two-length ($\kappa[10]{=}5,\kappa[30]{=}8$ solves copy at
both $L_0{=}20$ and $L{=}60$ via shallow clipping); and ``bounded offset'' is insufficient (the $i$-parity
offset $\delta(i)\in\{1,2\}$ scores exactly $0.5$, $20$k-kernel sweep). \emph{Capacity corollary:} for a training-length distribution with unbounded support, every fixed
kernel with a strict unique global maximizer incurs positive error on copy, so
$\mathrm{acc}_{\text{train}}<1$. The restriction matters: a kernel peaked at $c$ solves copy at
$L\in\{2c-1,2c\}$, so a distribution supported on those two lengths alone can be solved exactly.

\subsection{Theorem~\ref{thm:growing}: necessity of the hypotheses, and scope}
\label{app:thm2remarks}
\noindent\emph{Why each hypothesis is needed.} \emph{(a) The condition is asymptotic.} Requiring
$\delta_L(i)=\delta^*$ on every deep row would be too strong: $\alpha_L$ only measures a \emph{fraction},
so a density-zero exceptional set is compatible with $\alpha_L\to1$. Concretely, let $\delta_L(i)=2$ when
$i$ is a perfect square and $1$ otherwise. A kernel peaked at offset $1$ fails exactly on the squares, so
$1-\alpha_L=O(L^{-1/2})\to0$ even though the offset is not eventually constant
($\alpha_L=0.944,0.987,0.996$ at $L=200,5\cdot10^3,5\cdot10^4$).
\emph{(b) Attainability inside the family.} Constancy alone is not sufficient: $\delta^*$ must be
attainable in $\mathcal{K}$, and one cannot simply exhibit $\kappa(d)=-|d-\delta^*|$ without checking that
it belongs to the PE-induced family. Canonical ALiBi is exactly this case: its family
$\kappa(d)=c_0+sd$ with $s\le0$ has its maximizer at offset $0$ for every admissible slope, so no ALiBi
kernel peaks at $\delta^*{=}1$ and ALiBi cannot realize prev-token positionally, consistent with
Prop.~\ref{prop:static}. Condition~\eqref{eq:thm2cond}(ii) makes this
explicit. \emph{(c) The existence of a global maximizer is load-bearing.} Translation invariance alone does not
forbid tracking a \emph{growing} offset, because the causal window admits ever larger offsets as $L$
grows. Take the unbounded ladder $\kappa(d)=-1$ except $\kappa(20\cdot4^m)=m+2$ for $m=0,1,2,\dots$; its
values increase without bound, so it has no maximizer at all, and it attains $\alpha_L=1$ on copy along
the whole sequence $L=40,160,640,2560,10240,40960$ (we verified this numerically). A kernel that does
have a strict unique global maximizer $c(\kappa)$ can also solve copy at particular lengths, namely at
$L\in\{2c(\kappa)-1,2c(\kappa)\}$, where the demanded offset $\lceil L/2\rceil$ equals $c(\kappa)$, and
possibly at some $L\le c(\kappa)$ where the causal window clips the argmax onto a lower local peak (for
$\kappa$ peaked at $100$ with a secondary peak at $20$ we find $\alpha_L=1$ at $L=39,40,199,200$), but
it fails for every $L>2c(\kappa)$, which is the content of the corollary. The impossibility is
asymptotic; finitely many lucky lengths are not a counterexample to it. Growing-offset impossibility is therefore a statement
about kernels with a global peak, and we state it with that qualifier throughout. \emph{(d) Deep rows are
defined by $c(\kappa)$ alone.} One might also ask for $i\ge D$ for a support radius $D$, but that would be
vacuous for the very families we care about, since RoPE and ALiBi kernels have unbounded support, so
$D=\infty$ and no row would be deep. Globality of the maximizer is what the argument actually needs, and
with it the clipped set is $\mathcal{H}(\kappa)\subseteq[L_0,c(\kappa))$.
\noindent\emph{Scope of the characterization.} The criterion is
on \emph{deep} rows and the asymptotic $\alpha_L$; we do not claim a finite-$L$ ``complete iff''
that also pins shallow rows to $\min(i,\delta^*)$ -- that is \textbf{false}: the kernel
$\kappa=[1.46,1.96,1.80,6.32]$ (peak at offset $3$) makes shallow row $i{=}2$ select offset $1$, not
$\min(2,3){=}2$ (we reproduce this in App.~\ref{app:proofs}). Shallow rows are a finite, $\kappa$-dependent
set that does not affect the asymptotic $\alpha_L$. Two further cautions: the growing-offset clash is
\emph{asymptotic} (a kernel peaked at offset $30$ solves copy at both $L_0{=}20$ and $L{=}60$ via shallow
clipping -- there is no two-length contradiction); and ``bounded offset'' is insufficient. The last point
is now exactly condition~\eqref{eq:thm2cond}(i): an $i$-parity offset alternating in $\{1,2\}$ is bounded,
but each value occurs with density $1/2$, so no $\delta^*$ has a vanishing exceptional fraction and
$\alpha_L$ sits at $0.5$. What matters is that one offset occurs with density one; boundedness alone is not enough. This theorem subsumes the
growing-offset impossibility and the absolute/relative duality. Capacity corollary: for a training-length distribution with unbounded support, every fixed
kernel with a strict unique global maximizer incurs positive error on copy, so
$\mathrm{acc}_{\text{train}}<1$. The restriction matters: a kernel peaked at $c$ solves copy at
$L\in\{2c-1,2c\}$, so a distribution supported on those two lengths alone can be solved exactly. We
additionally stress-tested the corrected statement numerically, fuzzing both the kernel
($4000$ random kernels under both tie rules, in \texttt{iff\_constant\_offset\_lg.py}) and the task's offset function (constant, perfect-square,
power-of-two and cube exceptional sets, parity, drifting and block-alternating offsets, and growing
offsets), confirming that $\alpha_L\to1$ tracks density-1 constancy and nothing weaker
(\texttt{experiments/verify\_thm2\_density.py}); the proof does not rely on these checks.

\subsection{Proposition~\ref{prop:duality} (Absolute/relative duality)}
\emph{Relative} $\kappa(i-j)$: a fixed offset $k$ is realized by a $\kappa$ whose strict unique global
maximizer is $d{=}k$, correct on every row at every length ($\alphaL{=}1$). Attend-first ($t{=}0$)
demands the growing offset $i$, so by Theorem~\ref{thm:growing} no $\kappa$ \emph{with a strict unique
global maximizer} reaches $\alphaL\to1$. This is the right qualifier: a monotone increasing kernel such
as $\kappa(d)=d$ has no global maximizer and does realize attend-first at every length (the causal window
truncates at $d{=}i$), so what defeats attend-first in practice is the sign constraint on the canonical
ALiBi slope together with the $L_0$-fit's failure to select such a kernel, not translation invariance by
itself; the selected RoPE kernel decays as $\alphaL=0.80,0.40,0.20,0.12,0.055,0.039$ at
$L=30,40,60,128,256,512$.
\emph{Clamped learned-absolute} $\varphi(i)=v_{\min(i,L_0-1)}$: attend-first is realized by the
length-independent column-$0$-hot kernel (margin $+1$ at all lengths). A fixed relative offset moves the
target across absolute positions into the collapsed range, where all keys $j\ge L_0$ share one feature
and their logits tie exactly, so $\alphaL{=}0$ with margin $0$. For the i.i.d.-extended variant the
corresponding statement is the chance bound of Prop.~\ref{prop:learnedabs}. Growing offsets defeat both.
Verified for the \emph{clamped} object in \texttt{src/spectra.py} (attend-first $1.00$; prev $0.00$ with
margin exactly $0$) and for the \emph{i.i.d.-extended} object in
\texttt{experiments/duality\_abs\_vs\_rel.py} (prev $0.046$, the chance level of
Prop.~\ref{prop:learnedabs}). In \texttt{duality\_abs\_vs\_rel.py} the relative side scores $0.00$ on
attend-first and $1.00$ on prev; the concrete PEs behave as their own families dictate, so RoPE scores
$0.80$ on attend-first at $L{=}30$ and decays with $L$ (Prop.~\ref{prop:duality}(a)), while ALiBi scores
$0.00$ on prev (Prop.~\ref{prop:static}).
Note that \texttt{duality\_abs\_vs\_rel.py}'s attend-first ``absolute'' construction is the unrestricted
kernel $K[i,j]=\mathbf 1[j{=}0]$, i.e.\ object~(1) above, which is why it is reported separately from the
clamped values. \hfill$\square$

\section{Failure modes of $\alphaL$}
\label{app:failuremodes}
\paragraph{Failure modes of $\alphaL$.} We list the five ways $\alphaL$
departs from a trained model's length generalization. They are the precise boundary of the single-head
positional theory, and each is something we observe and characterize.
\begin{enumerate}\setlength\itemsep{1pt}
\item \textbf{Selection gap.} A rescuing kernel $W$ exists, but the $L_0$-fit does not select it, so
$\alphaL<1$ despite realizability in principle (sinusoidal fixed-offset: $\alphaL{=}0.45$ at $L{=}60$;
Prop.~\ref{prop:sin}). \emph{Existence $\neq$ selection.}
\item \textbf{Content rescue.} The target is carried by the \emph{content} channel, outside the
abstraction, so the positional $\alphaL$ under-predicts: LG occurs with $\alphaL{=}0$ (induction,
and ALiBi/NoPE prev/first at depth; App.~\ref{app:content}). This is the binding limitation (S3).
\item \textbf{Depth / multi-head escape.} Multiple heads or layers route around the single-head
impossibility (an anchor head $+$ a content-match head solves copy at depth~$1$), so $\alphaL$ can
under-predict at scale while remaining a \emph{sufficient screen} (\S\ref{sec:exp}).
\item \textbf{Low-frequency relative under-prediction.} Even within the positional channel and on a
\emph{content-neutral} task, $\alphaL$ is not a perfect graded predictor: on content-neutral skip-$k$ a
frequency-scaled (low-frequency) RoPE model generalizes a target the linear-kernel fit deems
unrealizable (Table~\ref{tab:freqsweep}, App.~\ref{app:freq}), so $\alphaL$ under-predicts it. The task
is content-neutral, so the discrepancy is not content rescue in the sense of (2); we have not isolated
the mechanism further, and we report it as an unexplained case where the trained model generalizes and the fitted kernel does not.
\item \textbf{Overprediction by the fitted kernel.} The converse also occurs: the fitted kernel can have
high $\alphaL$ while the trained model selects a different solution and fails to generalize
(\texttt{rope-first} at $L{=}30$, $\alphaL{=}0.80$ against LG $0.13$; \texttt{sinusoidal-first} at
$L{=}80$ on one seed). Theorem~\ref{thm:suff} is unaffected, since it is a statement about a single
realizing kernel; what fails is the empirical bridge from $\alpha_L(W_{\mathrm{fit}})$ to the behaviour
of $W_{\mathrm{trained}}$.
\end{enumerate}
\noindent The five are distinct: (1) is intra-positional selection, (2) is a different channel, (3) is a
larger architecture, (4) is a graded-versus-binary gap, and (5) is a mismatch between the fitted and the
trained kernel. The first four leave the \emph{sufficiency} direction ($\alphaL{\to}1\Rightarrow$ LG)
intact and are failures of \emph{necessity}, which is why we call $\alphaL$ a one-sided screen; (5) is
the one case where high $\alphaL$ does not co-occur with length generalization, which is why we report
the empirical link as a co-occurrence rate.

\subsection{Scope of the individual statements}
\label{app:notclaim}
The following limits attach to specific results and are easy to over-read.
(i)~No general sinusoidal LG impossibility:
Conjecture~\ref{conj:general} is open and the intuition that a sinusoidal encoding must fail once the
required offset exceeds its lowest-frequency period is \emph{refuted} (Prop.~\ref{prop:sin}).
(ii)~Sinusoidal fixed-offset transfer is not exact: existence of a rescuing
$W$ does not imply its selection by the $L_0$-fit. (iii)~The learned-absolute rate $1/(k-1)$ is
\emph{conditional} on the argmax falling among the off-diagonal new keys, and applies when the target is
one of them; the self-key is a non-exchangeable quadratic competitor, which is why it is $1/(k-1)$ and
not $1/k$, and the unconditional rate is smaller. (iv)~Prop.~\ref{prop:static}
is about the \emph{unique positive-margin} argmax; set-containment is a different statement (which fails on the measure-zero
tie locus). (v)~All impossibility results are single-head, positional-only; multi-head/depth/content
provably route around them. (vi)~Prop.~\ref{prop:learnedabs} proves chance via i.i.d.\ \emph{random}
untrained features, whereas our learned-absolute implementation \emph{clamps} $\varphi(i){=}e_{\min(i,L_0-1)}$
(matching the trained model); both routes give chance, by exchangeability and by degeneracy respectively,
but they are distinct mechanisms and we do not conflate them.

\section{Experimental details}
\label{app:exp}
\textbf{Architecture.} Decoder-only transformer, $d_{\text{model}}=64$, GELU MLP (width $4d$), pre-LN,
learned token embedding (vocab $12$). PEs: NoPE; sinusoidal (additive); learned-absolute (table of size
$L_0$, positions $\ge L_0$ clamped to $L_0-1$); RoPE (rotary on $q,k$); ALiBi (fixed geometric slopes).
Tiers: 1-layer/1-head, 2-layer/4-head, 4-layer/4-head. \textbf{Training.} AdamW (lr $3\times10^{-4}$,
wd $0.01$), batch $64$, $2000$ steps, variable train length $n\sim\mathrm{Unif}[L_0/2,L_0]$, $L_0=20$,
cross-entropy over query positions (ignore-index elsewhere). \textbf{Tasks} (target position $t_n(i)$,
all position-dependent via small vocab): prev ($i{-}1$), skip-$k$ ($i{-}k$, $k\in\{2,3,5\}$), copy
($i{-}\lceil n/2\rceil$), reverse, first ($0$). \textbf{Evaluation.} Accuracy on extrapolation query
rows $[L_0,L)$ at $L\in\{30,40,50,60,80\}$, $5$ seeds. \textbf{Predictor fit.} Cross-entropy fit of the
kernel coefficients to the $L_0$ targets ($400$ Adam steps), PE-structural constraints imposed
(RoPE relative basis; ALiBi recency slope constrained $\le0$); strict-unique-argmax scoring on new rows.
\textbf{Compute.} Apple M3 Pro (MPS); the full $2625$-measurement, $5$-seed study runs in $\sim$55\,min.

\section{Additional results}
\label{app:add}
Table~\ref{tab:depth} reports the depth trend (H3) over all three tiers; Fig.~\ref{fig:depth} plots it.
As depth grows ($1\to2\to4$ layers), the learnable-cell \emph{graded} rank correlation drops from
$0.90$ (CI $[0.73,0.95]$) to $0.31$ and $0.28$ with bootstrap CIs that include $0$
(Table~\ref{tab:depth}); the precision on high-$\alphaL$ rows instead stays high and tight
($0.925\to0.94\to0.97$). The lost graded signal is explained by the content channel: the set of low-$\alphaL$
cells that nonetheless generalize \emph{grows} with depth. Scoring cells by their means, it is
$\{$ALiBi prev/first/skip2, NoPE prev/first, RoPE first$\}$ at $2$ layers and gains ALiBi skip3 at $4$
layers, with the two marginal members strengthening as depth grows (NoPE prev $0.53\to0.74$, RoPE first
$0.83\to1.00$). Depth does not break the sufficient-indicator
property; it adds more content-channel routes around a low positional $\alphaL$, exactly the (S3)
boundary.

\begin{table}[h]
\centering
\caption{Depth trend (H3), learnable cells ($\mathrm{train\ acc}>0.8$), $5$ seeds. The Spearman column is
\emph{cell-level} with a bootstrap $95\%$ CI over the (PE, task) cells; the precision column is the rate
$\Pr[\mathrm{LG}\ge0.7\mid\alphaL\ge0.5]$ with a bootstrap $95\%$ CI over the high-$\alphaL$ \emph{rows}.
$\alphaL$ stays a high-precision \emph{sufficient} indicator at every depth (precision CIs well above
chance); the graded rank correlation, by contrast, is significant only at depth $1$ -- at depths $2,4$
its CI \emph{includes $0$}, consistent with $\alphaL$ acting as a binary screen rather than a graded
predictor once content rescues low-$\alphaL$ cells.}
\label{tab:depth}
\begin{tabular}{lccc}
\toprule
tier & depth & learnable Spearman $\rho$ (95\% CI) & sufficiency precision (95\% CI) \\
\midrule
1L1H & 1 & 0.90\ \ $[0.73, 0.95]$ & 0.925\ $[0.88, 0.96]$ \\
2L4H & 2 & 0.31\ \ $[-0.16, 0.76]$ & 0.94\ \ $[0.89, 0.97]$ \\
4L4H & 4 & 0.28\ \ $[-0.15, 0.66]$ & 0.97\ \ $[0.94, 0.99]$ \\
\bottomrule
\end{tabular}
\end{table}

\begin{figure}[h]
\centering
\includegraphics[width=0.5\textwidth]{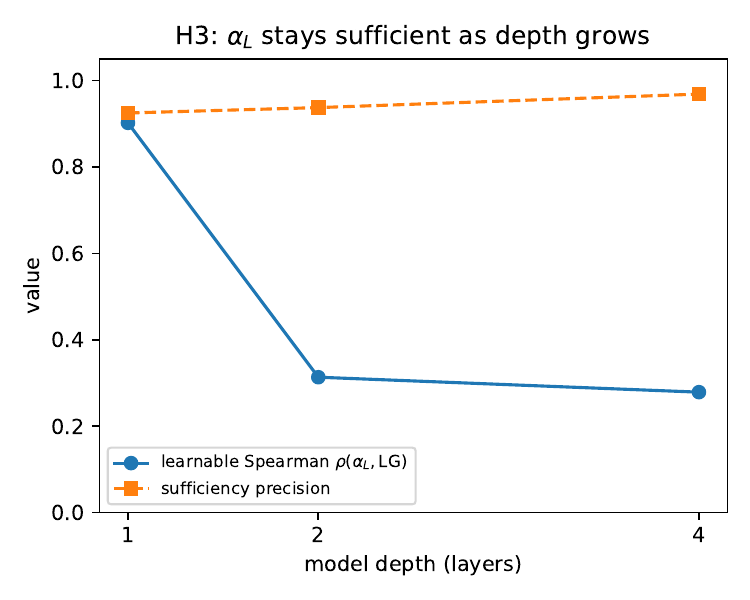}
\caption{H3: learnable-cell Spearman $\rho(\alphaL,\mathrm{LG})$ (circles) drops with depth (its
bootstrap CI includes $0$ at depths $2,4$), while sufficiency precision (squares) stays $\ge0.92$ with
tight CIs. $\alphaL$ remains a \emph{sufficient screen} at all depths.}
\label{fig:depth}
\end{figure}

\paragraph{Inductive-bias ablation (which fit drives the correlation).} Table~\ref{tab:ablation}
computes $\alphaL$ three ways on the $75$ 1L1H $(\text{PE,task},L)$ rows and correlates each
with empirical LG. The min-norm interpolant of Theorem~\ref{thm:suff} yields only $\rho{=}0.56$; the
regularized cross-entropy fit used throughout yields $0.91$, and its weight-decay$\to0$ limit, which
approaches the max-margin direction \citep{soudry2018implicit}, yields $0.92$. The predictive content of
$\alphaL$ therefore comes from the cross-entropy direction rather than from the min-norm object, a gap we
state here.

\begin{table}[h]
\centering
\caption{Inductive-bias ablation ($75$ rows in the 1L1H setting). The cross-entropy fit, not the min-norm
interpolant, drives the $\alphaL$--LG correlation.}
\label{tab:ablation}
\begin{tabular}{lcc}
\toprule
kernel fit & Spearman $\rho$ & Pearson \\
\midrule
min-norm interpolant (Thm~\ref{thm:suff}'s $W^*$) & 0.56 & 0.57 \\
cross-entropy, wd$=10^{-4}$ (reported $\alphaL$) & 0.91 & 0.95 \\
cross-entropy, wd$\to0$ (max-margin) & 0.92 & 0.94 \\
\bottomrule
\end{tabular}
\end{table}

\paragraph{Robustness checks.} The learnability cutoff is inert: cell-level $\rho=0.91/0.90/0.90$ at
$\mathrm{train}>\{0.5,0.7,0.9\}$ (train accuracy is bimodal). On the $20$ dropped cells $\alphaL$ has
std $0.02$ (uniformly $\approx0$), so it provides no rank signal there by construction. Scale robustness
(vocab $64$, $d{=}128$) is reported in App.~\ref{app:scale}; training-length robustness (the screen AUC
stays $1.0$ and the per-cell verdicts are unchanged at $L_0\in\{10,20,40\}$) in App.~\ref{app:l0sens};
and PE frequency/$\theta$-base/dimension robustness in App.~\ref{app:freq}. Interior fixed-absolute
positions, which remove the reliance on the special boundary position $0$, are in App.~\ref{app:interior}.

\section{Interior fixed-absolute positions}
\label{app:interior}
The fixed-absolute regime in the main grid is represented by \texttt{first}, whose target is position
$0$. That position is special: it is the causal boundary, so a monotone recency kernel can reach it
without addressing an absolute position at all. To check that the absolute side of the duality does not
depend on this artefact, we evaluate two interior targets, \texttt{pos2} and \texttt{pos5} (attend to
absolute position $2$, resp.\ $5$), and ran the full study on them: $5$ PEs $\times$ $2$ tasks $\times$
$3$ depth tiers $\times$ $5$ seeds $\times$ $5$ lengths, $750$ further trained-model measurements.

The separation is if anything cleaner than at the boundary (Table~\ref{tab:interior}). The absolute
encodings solve both interior positions and transfer them perfectly ($\alpha_{60}=1.00$, LG $\ge0.97$ at
$L=80$), while the relative encodings neither learn nor transfer them. The screen AUC over these ten
cells is $1.00$, with $\alphaL=1.00$ on every generalizing cell and $0.00$ on every failing one. The
boundary artefact is visible in the comparison: RoPE scores $\alpha_{60}=0.20$ on \texttt{first} but
$0.00$ on both interior positions, i.e.\ the residual score at position $0$ came from the causal boundary
rather than from any ability to address absolute positions.

\begin{table}[H]
\centering\small
\caption{Interior fixed-absolute positions (1L1H, mean over $5$ seeds). Absolute encodings realize and
transfer them; relative encodings do not learn them at all. Compare \texttt{first}, where RoPE scores
$\alpha_{60}=0.20$ purely because position $0$ is the causal boundary.}
\label{tab:interior}
\begin{tabular}{@{}llcccc@{}}
\toprule
PE & task & $\alpha_{60}$ & train acc & LG@60 & LG@80 \\
\midrule
learned-abs & pos2 & 1.00 & 1.00 & 1.00 & 1.00 \\
learned-abs & pos5 & 1.00 & 1.00 & 1.00 & 1.00 \\
sinusoidal  & pos2 & 1.00 & 0.98 & 0.98 & 0.97 \\
sinusoidal  & pos5 & 1.00 & 1.00 & 1.00 & 0.99 \\
\midrule
NoPE  & pos2 & 0.00 & 0.24 & 0.16 & 0.16 \\
NoPE  & pos5 & 0.00 & 0.22 & 0.16 & 0.15 \\
ALiBi & pos2 & 0.00 & 0.23 & 0.16 & 0.15 \\
ALiBi & pos5 & 0.00 & 0.22 & 0.16 & 0.15 \\
RoPE  & pos2 & 0.00 & 0.31 & 0.15 & 0.14 \\
RoPE  & pos5 & 0.00 & 0.20 & 0.16 & 0.15 \\
\bottomrule
\end{tabular}
\end{table}

\section{Scale robustness}
\label{app:scale}
Because $\alphaL$ depends only on $(\Phimat,t_n)$, it is \emph{identical} across model scales; the
question is whether the empirical LG it predicts is stable. We re-ran the full study at a $5\times$ larger
vocabulary ($64$ vs $12$) and $2\times$ width ($d{=}128$ vs $64$). The 1L1H cell-level
correlation is \emph{unchanged} (indeed slightly higher): $\rho=0.92$ (vs $0.90$ at $d{=}64$), with
sufficiency precision $0.94$; the scale tier moves $0.36\!\to\!0.45$ (Table~\ref{tab:scale}). The
headline relationship is therefore not an artifact of the tiny vocabulary or width. (This is a
robustness check within the synthetic position-driven family, not a claim about natural-language LLMs;
see \S\ref{sec:limits}.)

\begin{table}[h]
\centering
\caption{Scale robustness: cell-level Spearman $\rho(\alphaL,\mathrm{LG})$ on learnable cells, at the
paper's default size vs.\ a larger model. The predictor is identical; only the trained models change. The
scale study uses $2$ seeds ($700$ rows), and both columns are computed on the cells the two studies have
in common, so the $2$L$4$H entry ($0.36$) is over $19$ cells rather than the $20$ of
Table~\ref{tab:depth}, where the same quantity over all $20$ cells is $0.31$.}
\label{tab:scale}
\begin{tabular}{lcc}
\toprule
tier & vocab $12$, $d{=}64$ (main) & vocab $64$, $d{=}128$ \\
\midrule
1L1H & 0.90 & 0.92 \\
2L4H (scale) & 0.36 & 0.45 \\
\bottomrule
\end{tabular}
\end{table}

\section{Training-length ($L_0$) sensitivity}
\label{app:l0sens}
Is the clean separation an artifact of the default training length $L_0{=}20$? We re-ran the
1-layer 1-head study at $L_0\in\{10,20,40\}$, each evaluated at $3\times L_0$
(so the extrapolation factor is held fixed), $3$ seeds. The screen AUC is $1.000$ at every $L_0$, and
\emph{the same four cells generalize} in each case (rope prev/skip-3, sinusoidal/learned-abs first); the
Theorem~\ref{thm:growing} verdicts are unchanged (relative bounded offsets transfer; relative growing
offsets do not). The separation is therefore not specific to $L_0{=}20$ (Table~\ref{tab:l0sens}).

\begin{table}[H]
\centering\small
\caption{$L_0$-sensitivity (1L1H, predictor $\alphaL$ / empirical LG at $L{=}3L_0$). The screen AUC is
$1.0$ at every training length and the per-cell verdicts are stable; the separation is not an artifact of
$L_0{=}20$. Generated by \texttt{experiments/l0\_sensitivity.py}.}
\label{tab:l0sens}
\begin{tabular}{@{}lccc@{}}
\toprule
 & $L_0{=}10\ (\to30)$ & $L_0{=}20\ (\to60)$ & $L_0{=}40\ (\to120)$ \\
\midrule
screen AUC                       & \textbf{1.00} & \textbf{1.00} & \textbf{1.00} \\
\# generalizing cells            & 4/25 & 4/25 & 4/25 \\
\midrule
rope / prev ($\alphaL$\,/\,LG)   & 1.00\,/\,0.93 & 1.00\,/\,0.87 & 1.00\,/\,0.87 \\
rope / skip-3                    & 1.00\,/\,1.00 & 1.00\,/\,1.00 & 1.00\,/\,0.98 \\
rope / copy (growing)            & 0.00\,/\,0.12 & 0.00\,/\,0.10 & 0.00\,/\,0.09 \\
rope / reverse (growing)         & 0.00\,/\,0.16 & 0.03\,/\,0.14 & 0.00\,/\,0.12 \\
sinusoidal / first (absolute)    & 1.00\,/\,0.92 & 1.00\,/\,0.83 & 1.00\,/\,0.88 \\
learned-abs / first (absolute)   & 1.00\,/\,1.00 & 1.00\,/\,1.00 & 1.00\,/\,1.00 \\
\bottomrule
\end{tabular}
\end{table}

\section{A content-aware extension: $\alpha_L^{+}$ (with a demonstration)}
\label{app:content}
The binding limitation is (S3): where the content channel carries the target, the positional $\alphaL$
under-predicts. We show the one-sidedness is closable, using a content-driven task, and state what is
and is not recovered.

\paragraph{Setup.} Augment the realizable space from the positional kernel $\Kn_n$ to
$\Kn_n + \mathcal{C}$, where $\mathcal{C}$ is the content-matching channel
$e(x_i)^\top W_c e(x_j)$. Define $\alpha_L^{+}=\max(\alphaL,\ \alpha_L^{\mathrm{content}})$, where
$\alpha_L^{\mathrm{content}}=1$ iff the target is realizable by a \emph{length-stationary} content rule
(a token match composed with a fixed shift). The canonical such task is \textbf{induction}: the target
at $i$ is the token following the previous occurrence of $x_i$ -- content-determined, hence
$\alphaL{=}0$ for every PE (no length-dependent positional rule realizes it), while
$\alpha_L^{\mathrm{content}}{=}1$.

\paragraph{Demonstration (Table~\ref{tab:induction}).} We train induction across PEs.
At $1$ layer the circuit cannot form (train acc $\approx0.25$); at $2$ layers it is learned (train
$0.76$--$1.0$) and \emph{generalizes}, strongly for ALiBi (LG $=0.84$), partially for RoPE/NoPE
($0.43/0.34$). So positional $\alphaL=0$ \emph{mispredicts} a cell with LG~$0.84$ -- a real
one-sidedness failure -- which $\alpha_L^{+}$ correctly flags as realizable through the content channel. \textbf{Caveat:} induction LG is PE-\emph{modulated} ($0.26$--$0.84$: the content circuit still needs a working
prev-token head, which absolute PEs support poorly), so the \emph{binary} $\alpha_L^{\mathrm{content}}$
over-predicts for sinusoidal/learned-abs. A graded content channel weighting $\alpha_L^{\mathrm{content}}$
by the PE's support for the match circuit -- and the multi-head extension that provably escapes
Theorem~\ref{thm:growing} (anchor head $+$ content-match head solves copy at depth~1) -- are the natural
next steps. The point is narrow: the positional indicator's blind spot is identifiable, and the content channel
recovers a cell it misses.

\paragraph{A frequency sweep for a discriminating cell.} One may reasonably ask for one
(PE, task) cell where $\alphaL$ beats the relative/absolute taxonomy. We searched the most promising
direction: a \emph{frequency-scaled RoPE} (still relative) on the \emph{content-neutral} skip-$k$ task
(the query token is independent of the target $s[i{-}k]$, so content cannot supply the answer), sweeping
the frequency scale to provoke a too smooth to separate nearby offsets failure $\alphaL$ would catch. The result is
negative (Table~\ref{tab:freqsweep}): standard and moderately-low frequencies resolve the offset
($\alphaL{=}1$, LG~$\approx1$); at scale $0.03$ the trained head \emph{still} generalizes (LG~$=0.88$)
while $\alphaL{=}0$ -- an $\alphaL$ \emph{false negative}, with the trained model generalizing a target
the linear-kernel fit deems unrealizable; and at scale $0.01$ the model fails to learn even
in-distribution (excluded by the learnability filter). \textbf{No discriminating cell exists in this
family, and the probe instead reveals that $\alphaL$ under-predicts low-frequency relative PEs.}
Combined with the row-level over-predictions (rope-first), the verdict is that $\alphaL$
\emph{matches but does not beat} the taxonomy as a predictor; its contribution is the theory that
\emph{explains} the taxonomy, not a superior predictor. (The low-frequency case is not content rescue, since skip-$k$ is content-neutral; we do not claim to
have identified the mechanism behind it.)

\begin{table}[H]
\centering
\caption{Content-driven induction (target $=$ token after the previous occurrence of $x_i$): positional
$\alphaL=0$ for all PEs, yet $2$-layer models generalize (LG up to $0.84$). $\alpha_L^{+}$ flags the
realizable through the content channel regime $\alphaL$ misses; residual PE-modulation of content LG is future work.}
\label{tab:induction}
\begin{tabular}{lccc}
\toprule
PE & train acc (2L4H) & LG@60 (2L4H) & positional $\alphaL$ \\
\midrule
ALiBi       & 0.96 & \textbf{0.84} & 0 \\
RoPE        & 1.00 & 0.43 & 0 \\
NoPE        & 0.76 & 0.34 & 0 \\
learned-abs & 0.99 & 0.27 & 0 \\
sinusoidal  & 0.98 & 0.26 & 0 \\
\bottomrule
\end{tabular}
\end{table}

\begin{table}[H]
\centering
\caption{RoPE frequency sweep on content-neutral skip-3 (1L1H). No discriminating cell: where the model
generalizes the taxonomy is right; at scale $0.03$ $\alphaL$ under-predicts (false negative); at
$0.01$ the model cannot learn at all.}
\label{tab:freqsweep}
\begin{tabular}{lccc}
\toprule
RoPE freq.\ scale & $\alphaL$ & train acc & LG@60 \\
\midrule
$1.0$ / $0.3$ & 1.00 & 1.00 & 1.00 \\
$0.1$         & 1.00 & 0.94 & 0.93 \\
$0.03$        & \textbf{0.00} & 0.88 & \textbf{0.88} \\
$0.01$        & 0.00 & 0.23 & 0.18 \\
\bottomrule
\end{tabular}
\end{table}

\section{Frequency and dimension sensitivity of $\alphaL$}
\label{app:freq}
Since $\alphaL$ is computed from the PE feature map, it is fair to ask whether its verdict is an artifact
of our default frequency count, $\theta$-base, or feature dimension. We sweep each (predictor-only, no
training; $L_0{=}20\!\to\!L{=}60$) and find a clean separation: the realizability \emph{verdict} is
stable wherever the theory makes a clean claim, and the only cells whose realizability verdict
moves are exactly the sinusoidal bounded-offset cells the theory flags as a selection gap
(Prop.~\ref{prop:sin}); a few other cells show small sub-threshold raw movement that never flips the verdict.

\begin{table}[H]
\centering\small
\caption{$\alphaL$ for RoPE under varying frequency count $\mathrm{num}$ (feature dimension $m{=}2\,\mathrm{num}$)
and $\theta$-base. The verdict is invariant: fixed relative offsets transfer ($\mathrm{prev,skip3}{=}1.00$
everywhere), growing offsets never transfer (copy/reverse${\approx}0$), and the kernel the $L_0$-fit
selects does not hit a fixed absolute position (first stays at chance; a monotone kernel in the family
could, but the fit does not select it, see the discussion after Prop.~\ref{prop:duality}). For
low-frequency \emph{scaling} see Table~\ref{tab:freqsweep}.}
\label{tab:freqrope}
\begin{tabular}{@{}lccccc@{}}
\toprule
RoPE setting & prev & skip3 & copy & reverse & first \\
\midrule
$\mathrm{num}{=}8$ ($m{=}16$)   & 1.00 & 1.00 & 0.00 & 0.00 & 0.17 \\
$\mathrm{num}{=}16$ ($m{=}32$)  & 1.00 & 1.00 & 0.00 & 0.03 & 0.20 \\
$\mathrm{num}{=}32$ ($m{=}64$)  & 1.00 & 1.00 & 0.00 & 0.00 & 0.03 \\
base $10^{3}$                   & 1.00 & 1.00 & 0.00 & 0.03 & 0.05 \\
base $10^{5}$                   & 1.00 & 1.00 & 0.00 & 0.03 & 0.03 \\
\bottomrule
\end{tabular}
\end{table}

\begin{table}[H]
\centering\small
\caption{$\alphaL$ for sinusoidal under the same sweep. The absolute verdict is invariant (first${\approx}1$
everywhere; growing offsets ${\approx}0$). The bounded-offset cells (prev/skip3) are the only
entries whose verdict moves with frequency -- they \emph{degrade} as frequencies are added
($\mathrm{num}{:}8{\to}32$ drives prev $0.47{\to}0.00$); the other cells' raw $\alphaL$ shifts stay
sub-threshold (e.g.\ first $0.93{\to}1.00$) and never flip the verdict. This is what the theory predicts: enlarging the realizable space does not help the $L_0$-fit
\emph{select} a transferring kernel (existence $\neq$ selection, Prop.~\ref{prop:sin}). Verdict movement
appears precisely at the selection gap and nowhere else.}
\label{tab:freqsin}
\begin{tabular}{@{}lccccc@{}}
\toprule
sinusoidal setting & prev & skip3 & copy & reverse & first \\
\midrule
$\mathrm{num}{=}8$ ($m{=}16$)   & 0.47 & 0.50 & 0.00 & 0.00 & 0.93 \\
$\mathrm{num}{=}16$ ($m{=}32$)  & 0.45 & 0.25 & 0.07 & 0.00 & 1.00 \\
$\mathrm{num}{=}32$ ($m{=}64$)  & 0.00 & 0.00 & 0.00 & 0.00 & 1.00 \\
base $10^{3}$                   & 0.17 & 0.28 & 0.00 & 0.00 & 1.00 \\
base $10^{5}$                   & 0.15 & 0.12 & 0.00 & 0.00 & 1.00 \\
\bottomrule
\end{tabular}
\end{table}

\noindent\textbf{Takeaway.} Across frequency count, $\theta$-base, and feature dimension, $\alphaL$'s
\emph{binary realizability verdict} is stable for every regime with a clean theoretical prediction; the
residual frequency dependence is confined to the sinusoidal bounded-offset selection gap, which is the
behaviour Prop.~\ref{prop:sin} predicts. The screen is therefore not an artifact of the default PE
hyperparameters. (Generated by \texttt{experiments/freq\_dim\_sensitivity.py}.)

\section{Reproducibility checklist}
\label{app:repro}
The full source (predictor, models, tasks, experiments and theory-verification scripts) is
released at \url{https://github.com/superajgklasjglaks/positional-realizability}.
\begin{itemize}\setlength\itemsep{1pt}
\item \textbf{Predictor.} $\alphaL$ is in \texttt{src/spectra.py} (\texttt{simulate(pe,task,$L_0$,$L$)});
it uses no end-to-end Transformer training, no data samples, and no learned Transformer parameters -- only a
deterministic kernel-selection on the PE basis. Models: \texttt{src/model.py} (the five PEs);
tasks: \texttt{src/data.py}.
\item \textbf{Seeds.} The main study uses seeds $\{0,1,2,3,4\}$ (fixed in \texttt{run\_study.py});
bootstrap CIs resample cells for the correlations and high-$\alphaL$ rows for the precisions. All randomness is seeded.
\item \textbf{Configuration.} The design grid (PEs $\times$ tasks $\times$ lengths $\times$ seeds $\times$
depth tiers) is Table~\ref{tab:design}; architecture/optimizer/training details are in
App.~\ref{app:exp}. Tiers and tags are env-overridable in \texttt{run\_study.py}
(\texttt{SEEDS}, \texttt{DMODEL}, \texttt{TIERS\_ONLY}, \texttt{STUDY\_TAG}); the run is resumable.
\item \textbf{Data behind every number.} \texttt{results/study\_rows.json} (all $2625$ rows, including
runs the model failed to learn at $L_0$ -- no failed runs are silently dropped; the learnability
filter \texttt{train\_acc}${>}0.8$ is applied explicitly and reported alongside the unconditioned
numbers). Summaries in \texttt{results/study\_summary.json}.
\item \textbf{Figures/tables $\to$ scripts.} Figs.~\ref{fig:scatter}--\ref{fig:depth} and the heatmaps:
\texttt{make\_figures.py}. Headline statistics (AUC, permutation $p$, Spearman + bootstrap CIs,
sufficiency precision): \texttt{review\_stats.py}. Scale robustness (Table~\ref{tab:scale}):
\texttt{run\_study.py} with \texttt{STUDY\_TAG=scale}. $L_0$-sensitivity (Table~\ref{tab:l0sens}):
\texttt{l0\_sensitivity.py}. Frequency/dimension sensitivity (Tables~\ref{tab:freqrope}--\ref{tab:freqsin}):
\texttt{freq\_dim\_sensitivity.py}. Content extension / induction (Table~\ref{tab:induction}) and the
RoPE frequency sweep (Table~\ref{tab:freqsweep}): \texttt{ablation\_fit.py} and the content study.
\item \textbf{Theory verification.} The following claims have executable checks; the list is
exhaustive, and claims not named here are supported by their proofs alone:
relative growing-offset impossibility (\texttt{theorem\_relative\_copy\_impossibility.py}), the
deep-row iff and its counterexample $\kappa{=}[1.46,1.96,1.80,6.32]$ (\texttt{iff\_constant\_offset\_lg.py}),
learned-absolute chance $1/(k{-}1)$ (\texttt{verify\_learned\_abs\_fixed\_position\_iff.py},
\texttt{theorem2b\_learned\_abs\_montecarlo.py}), the absolute/relative duality
(\texttt{duality\_abs\_vs\_rel.py}), and the Conjecture~\ref{conj:general} cone/selection probes
(\texttt{underdetermined\_conj1\_*.py}).
\item \textbf{Compute.} The full study runs in ${\approx}55$ min on one Apple M3 Pro (MPS); the predictor
and all sensitivity sweeps run in seconds on CPU.
\end{itemize}

\end{document}